\definecolor{royalblue}{RGB}{65,105,225}
\newtheorem{assumption}{Assumption}
\newtheorem{theorem}{Theorem}
\newcommand{\website}{\newcommand{\website}{https://failure-aware-rl.github.io}}
\definecolor{ourcolor}{HTML}{99e0eb}
\definecolor{ourblue}{HTML}{27a2c3}
\definecolor{tablecolor}{HTML}{ccf2f5} 
\definecolor{tablecolor2}{HTML}{ffcdb4}
\definecolor{citecolor}{HTML}{fe7b5b}
\definecolor{grey}{rgb}{0.9, 0.9, 0.9}
\definecolor{gred}{rgb}{0.859,0.267,0.216}
\definecolor{ggreen}{rgb}{0.059,0.616,0.345}
\definecolor{deepblue}{HTML}{27a2c3}
\definecolor{deepred}{HTML}{fe7b5b}
\newcommand\ours{FARL\xspace}
\newcommand{\failures}[1][]{%
    \ifstrequal{#1}{full}{%
        Intervention-requiring Failures
    }{%
        \ifstrequal{#1}{single}{%
            IR Failure
        }{%
            IR Failures
        }%
    }\xspace
}
\title{\LARGE \bf 
Failure-Aware RL: Reliable Offline-to-Online Reinforcement Learning with Self-Recovery for Real-World Manipulation
\vspace{-0.35em}
}
\author{
Huanyu~Li$^{1,2*}$,
Kun~Lei$^{1,2*}$,
Sheng~Zang$^{4}$,
Kaizhe~Hu$^{1,3}$, 
Yongyuan~Liang$^{6}$,
Bo~An$^{4}$,
Xiaoli~Li$^{5}$,
Huazhe~Xu$^{1,3}$%
\thanks{$^{*}$Equal contribution.
$^{1}$Shanghai Qi Zhi Institute,
$^{2}$Shanghai Jiao Tong University,
$^{3}$IIIS, Tsinghua University,
$^{4}$Nanyang Technological University,
$^{5}$A*STAR Institute for Infocomm Research,
$^{6}$University of Maryland.}%
}
\begin{document}

\maketitle
\thispagestyle{empty}
\pagestyle{empty}

\begin{strip}
\vspace{-23mm}
\begin{center}
    \includegraphics[width=\textwidth, trim={0 0mm 0cm 0},]{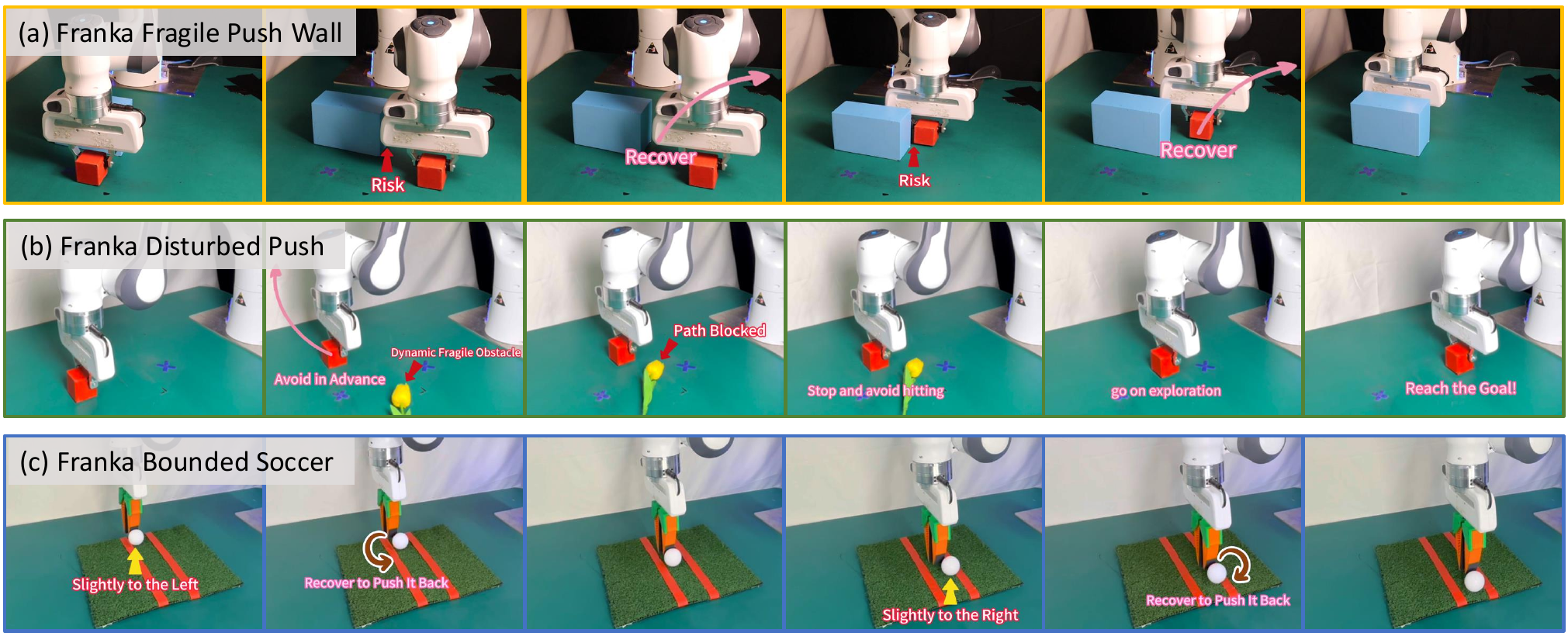}
    \captionof{figure}{
    \ours demonstrated on a Franka Emika Panda robot across three manipulation tasks: (a) pushing fragile objects while avoiding wall collisions, (b) pushing with dynamic obstacle avoidance, (c) soccer with boundary constraints (detailed in Section VI.C.1). \ours predicts potential failures and executes recovery actions, significantly reducing \failures[full] during real-world RL while improving task performance.}
    \label{fig:mainteaser}
\end{center}
\vspace{-4mm}
\end{strip}

\thispagestyle{empty}
\pagestyle{empty}

\begin{abstract}
Post-training algorithms based on deep reinforcement learning can push the limits of robotic models for specific objectives, such as generalizability, accuracy, and robustness.
However, \failures[full](\failures)~(e.g., a robot spilling water or breaking fragile glass) during real-world exploration happen inevitably, hindering the practical deployment of such a paradigm.
To tackle this, we introduce Failure-Aware Offline-to-Online Reinforcement Learning~(\ours), a new paradigm minimizing failures during real-world reinforcement learning.  We create FailureBench, a benchmark that incorporates common failure scenarios requiring human intervention, and propose an algorithm that integrates a world-model-based safety critic and a recovery policy trained offline to prevent failures during online exploration. 
Extensive simulation and real-world experiments demonstrate the effectiveness of \ours in significantly reducing \failures while improving performance and generalization during online reinforcement learning post-training. \ours reduces \failures by 73.1\% while elevating performance by 11.3\% on average during real-world RL post-training. Videos and code are available at \href{https://failure-aware-rl.github.io}{failure-aware-rl.github.io}.
\end{abstract}

\section{INTRODUCTION}

Learning efficient and accurate policies has been longed for years by roboticists and AI researchers.
However, pure imitation learning~\cite{wu2019behavior,shridhar2023perceiver,chi2023diffusion} often overfits the training distribution, while reinforcement learning is notoriously data-inefficient. 

Consequently, reinforcement learning (RL) based post-training becomes crucial for continuously refining specific objectives and adapting policies to the dynamic conditions of the real world. Specifically, offline-to-online RL~\cite{CQL, iql, lei2024unio}, where the agent is first trained offline on demonstrations and then fine-tuned through online RL, leverages the advantages of both demonstration-based pre-training and RL-based post-training.
However, a significant challenge in deploying RL in such settings is the occurrence of \failures[full](\failures) during the learning process. These failures stem from the intrinsic necessity for exploration in RL, which introduces randomness into the agent's actions. While exploration is crucial for learning optimal policies, it can result in actions that lead to irreversible damage or unsafe situations, such as breaking fragile objects, knocking items out of reach, or damaging the robot arm by hitting objects. Thus, \failures often necessitate human intervention to resolve, as they cannot be easily addressed through heuristic methods or reset-free RL techniques~\cite{eysenbach2017leavetracelearningreset, gupta2021resetfreereinforcementlearningmultitask, sharma2022autonomousreinforcementlearningformalism}.

To this end, we introduce the concept of \textbf{F}ailure-\textbf{A}ware Offline-to-Online \textbf{R}einforcement \textbf{L}earning~(\ours), where the agent refines its policy while minimizing \failures that would otherwise require human intervention.

Our method builds on an offline-to-online RL algorithm~\cite{lei2024unio}, which employs a policy gradient objective to unify online and offline RL in an on-policy manner. 

In this work, we focus on failure-aware post-training within real-world manipulation settings.

To study various failure scenarios in simulation environments, we introduce a new benchmark, \textbf{FailureBench}.

\textbf{FailureBench} builds upon existing simulation environments~\cite{yu2019meta} by incorporating common IR Failure scenarios that frequently occur in real-world manipulation tasks.
We simulate situations where failures necessitate human intervention, such as objects being pushed out of the workspace or the robot entering unsafe states. This benchmark enables us to evaluate how effectively the RL algorithms balance performance improvement and generalization while minimizing \failures during exploration.

Our research reveals that existing online and offline-to-online RL algorithms~\cite{lei2024unio, cal-ql, li2023proto} frequently encounter \failures in this setting due to the inherent exploration-exploitation trade-off in RL. To mitigate this problem, we introduce a safety critic based on a latent world model for failure prediction, along with a recovery policy designed to prevent failures foreseen by the safety critic.

Both components are trained offline using carefully curated demonstrations and later deployed to prevent failures during the online task policy post-training process.

Our framework demonstrates significant reductions in \failures across various simulated settings within FailureBench, while simultaneously enhancing task performance and generalizability. We validate the effectiveness of \ours through real-world experiments conducted on a Franka Emika Panda robot, showing that our methods substantially reduce the need for human intervention during training.
\textbf{Our contributions are summarized as follows}:

\begin{itemize}

    \item We identify a major barrier to deploying RL in real-world scenarios: \failures caused by exploration-induced randomness. To study such potential risks in existing methods, we introduce \textbf{FailureBench}, a benchmark that repurposes existing RL environments to study \failures[single] cases requiring human intervention, enabling evaluation of RL algorithms for both performance and failure minimization.
    \item We propose a failure-aware offline-to-online framework, including a specifically designed world model and recovery policy to minimize \failures while facilitating learning and adaptation with RL, theoretically justified by an ``advantage correction'' analysis to simultaneously enhance learning and safety.

    \item We conduct extensive experiments in both simulated environments and three challenging real robotic tasks susceptible to \failures to validate the effectiveness of our approach.
\end{itemize}

\section{Related Work}
Previous research in safe reinforcement learning has extensively examined safety through various approaches, focusing mainly on learning-from-scratch scenarios~\cite{ray2019benchmarking,p3o,cpo}. However, with advances in modern robotic models that learn from demonstrations~\cite{wu2019behavior, shridhar2023perceiver, chi2023diffusion, yuan2025hermes, he2024learning, jiang2024robots}, there is an increasing recognition of the importance of online post-training strategies. Consequently, this work shifts its focus to a failure-aware offline-to-online reinforcement learning setting.

\textbf{Safe RL.} Constrained Markov Decision Processes (CMDPs)~\cite{altman2021constrained, ji2024ace, xu2023drm} have gained significant attention within the RL community, particularly in the context of constrained and safe real-world learning. Safe and recovery RL seeks to address two primary challenges~\cite{liu2024safe}: the development of effective safety constraints~\cite{wachi2024survey} and the formulation of safe actions. To tackle these challenges, a substantial body of work utilizes the established online RL workflow to balance task rewards and constraints, their focus lies on optimization techniques, such as Lagrangian relaxation~\cite{liang2018accelerated, tansehoon, lg1}, Lyapunov functions~\cite{chow2018lyapunov, chow2019lyapunov} and robustness guarantees~\cite{liang2022efficient, liu2022robustness, liang2024gametheoretic, liu2024beyond}. However, a common issue is that these constraints are often enforced prematurely, which limits exploration and can reduce overall performance~\cite{he2024agile}. 
The term ``recovery'' in safe RL can be misleading. True recovery methods like damage adaptation~\cite{cully2015robots} address post-failure adaptation, while many ``recovery RL'' methods actually prevent failures. Recovery RL~\cite{thananjeyan2021recovery} and ABS~\cite{he2024agile} predict constraint violations and employ recovery policies to avoid unsafe states before failures occur. Similarly, recent prevention methods include Control Barrier Functions~\cite{cbf} that ensure forward invariance of safe sets, and predictive safety filters~\cite{wabersich2021predictive} that use MPC to modify unsafe control inputs. SafeDreamer~\cite{huang2024safedreamer} integrates Lagrangian methods into the DreamerV3 planning process for model-based safe RL. While SafeDreamer focuses on traditional safe RL tasks in simulation, our \ours addresses offline-to-online post-training of pre-trained policies in the real world. Alternative approaches include hierarchical safe RL~\cite{dalal2018safe, xiao2024safe}, which utilizes structural dynamics, and methods incorporating safety certificates from control theory~\cite{cheng2019end, 2020arXiv200504374N}.

\textbf{Offline-to-online RL.}
Offline RL aims to address distributional shift issues that arise when a policy encounters out-of-distribution (OOD) state-action pairs. Prior methods mitigate this challenge by incorporating conservatism~\cite{CQL} or constraint-based regularization~\cite{iql, onesteprl, bppo}, thereby either discouraging the policy from exploring OOD regions or providing conservative value estimates.
Once pre-trained with offline data, policies can be further improved through online fine-tuning. However, directly applying standard online off-policy RL algorithms often leads to severe performance degradation due to distributional shift during online exploration~\cite{yuan2023rlvigen}. 
Uni-O4~\cite{lei2024unio} directly applies the PPO~\cite{ppo} objective to unify offline and online learning, eliminating the need for extra regularization. RL-100 \citep{rl100} combines iterative offline RL with online RL to train diffusion-based policies for deployable robot learning.
However, deploying such offline-to-online methods in real-world robotic systems remains unsafe and expensive due to the high risk of \failures during online exploration.
In this work, we build upon Uni-O4 and extend it to address safety concerns in real-world manipulation tasks, enabling safer and more efficient policy refinement in real-world environments.

\begin{figure*}[t]
    \centering
    \includegraphics[width=0.91\linewidth, trim={0 0mm 0cm 0}, clip]{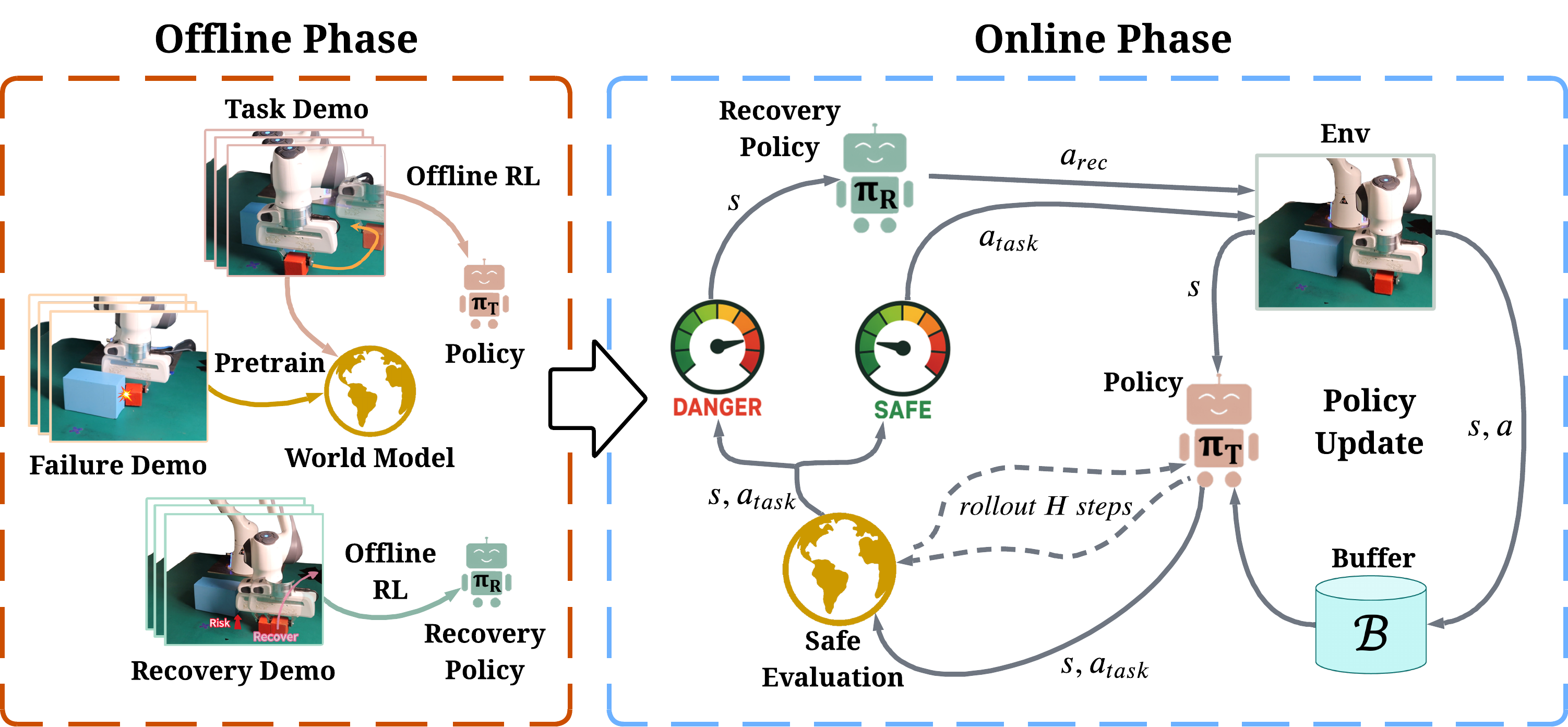}
    \caption{\small The entire training pipeline consists of two main phases: 1) the offline phase, which involves pre-training the task policy, recovery policy, and world model, and 2) the online phase, during which the task policy is fine-tuned within safe exploration settings.}
    \label{fig:teaser}
    \vspace{-0.5cm}
\end{figure*}

\section{Problem Statement}

We consider RL-based post-training under Constrained Markov Decision Processes (CMDPs)~\cite{altman2021constrained}. Following \citet{thananjeyan2021recovery}, we limit constraint costs to binary indicator functions that identify constraint-violating states. This can be described by tuple $\mathcal{M}\left(\mathcal{S}, \mathcal{A}, P(\cdot \mid \cdot, \cdot), R(\cdot, \cdot), \gamma, C(\cdot), \gamma_{\text {risk }}, \mu\right)$. Here, $\mathcal{S}$ and $\mathcal{A}$ are the state and action spaces. $P:\mathcal{S}\times\mathcal{A}\times\mathcal{S}\rightarrow{}[0,1]$ denotes the stochastic dynamics model which maps the given state and action to a probability distribution over the next state. $R:\mathcal{S}\times\mathcal{A}\rightarrow{}\mathbb{R}$ is the reward function, and $\gamma$ denotes the discount factor. $C:\mathcal{S}\times\mathcal{A}\rightarrow{}\{0,1\}$ is a constraint cost function, which denotes whether a state and action pair violate the designed constraint and it is associated with a threshold $\gamma_{risk}$.

Given a set of behavior policies $\pi \in \Pi$, the objective of the expected return can be defined as $R=\mathbb{E}_{\pi,\mu, P}\left[\sum_t\gamma^t \left(s_t, a_t\right)\right]$. 
We consider the $H$-steps discounted probability of constraint violation, which can be defined as 

\begin{equation}
\label{eq:c_pi_h}
\begin{aligned}
C_H^{\pi}&=\mathbb{E}_{\boldsymbol{\pi},\boldsymbol{\mu},\boldsymbol{P}}\left[\sum_{i=t}^{t+H}\boldsymbol{\gamma}_\mathrm{risk}^{i-t}C(s_{t}, a_t)\right] \\ &=\sum_{i=t}^{t+H}\boldsymbol{\gamma}_\mathrm{risk}^{i-t}\mathbb{P}\left(\boldsymbol{C}(s_{t},a_t)=1\right)
\end{aligned}
\end{equation}
in which the constraint costs are defined by the binary indicator functions. 

Thus, the objective of RL under CMDPs is:

\begin{equation}
\label{eq:cmdp}
\begin{aligned}
\pi^* = \underset{\pi \in \Pi}{\operatorname{argmax}} \, R^\pi \; \text{s.t.} \; C_H^{\pi} \leq \boldsymbol{\varepsilon}_{\mathrm{safe}}
\end{aligned}
\end{equation}
The constraints define a set of feasible policies, i.e., \{$\pi \in \Pi: \quad C_H^{\pi} \leq \boldsymbol{\varepsilon}_{\mathrm{safe}}$\}. We optimize this objective over the set of feasible policies.

In this work, we propose a failure-aware offline-to-online RL algorithm designed to optimize safe policy exploration and refinement. The core concept is inspired by recovery RL. \textbf{\textit{However, our focus is on post-training, which enhances robotic models learned from demonstrations by refining specific capabilities and providing a better policy initialization for safety, rather than relying on unnecessary random exploration from scratch.}}

\section{Method}
We formally introduce \ours, our offline-to-online failure-aware RL framework. Our overall algorithm pipeline comprises both offline and online phases, as summarized in Figure \ref{fig:teaser}: 
1) We first pre-train a task policy, a recovery policy, and a world model. The task policy is trained using task demonstrations that show successful task completion. The recovery policy is trained with recovery demonstrations that illustrate how to avoid or escape from near-failure states. The world model leverages both task demonstrations and failure demonstrations that capture state-action sequences leading to \failures.
2) Next, we fine-tune the task policy, which is optimized according to Eq. \eqref{eq:cmdp}, and guided by the recovery policy. This policy directs the agent back to the state-action pair $(s, a)$ where $C_H^{\pi} \leq \boldsymbol{\varepsilon}_{\mathrm{safe}}$. During the online phase, the recovery policy and world model remain fixed to minimize \failures during exploration.
\subsection{Offline Pre-training}

We pre-train a task policy to serve as the initial framework for online fine-tuning. In addition, we also pre-train a recovery policy and a world model, both designed to guide the task policy's exploration and help prevent \failures during the online phase. For the task policy pre-training, we follow the training pipeline established in Uni-O4~\cite{lei2024unio}. Initially, the policy undergoes behavior cloning, followed by fine-tuning using the objective:
\begin{equation}
\vspace{-1.5em}
\label{eq:ppo loss}
\begin{aligned}
    J_{k}\left(\pi\right) = & \mathbb{E}_{s \sim \rho_{\pi}\left(\cdot\right), a \sim \pi_{k}\left(\cdot|s\right)}\bigg[
    \min\Big(
    r(\pi)A(s,a), \\
    & \text{clip}\left(r(\pi),1-\epsilon,1+\epsilon\right)A(s,a)\Big)
    \bigg]
\end{aligned}
\vspace{1em}
\end{equation}
where $\rho_{\pi}$ is the stationary distribution of states under policy $\pi$, $r(\pi)=\frac{\pi\left(a|s\right)}{\pi_k\left(a|s\right)}$ denotes the importance sampling ratio between the target policy $\pi$ and behavior policy $\pi_k$, clip$(\cdot)$ is a conservatism operation that constrains the ratio, hyper-parameter $\epsilon$ is used to adjust the degree of conservatism, and $A(s_t, a_t)$ is the advantage function.
Subsequently, we continue fine-tuning within the environment to optimize the objective in Equation \ref{eq:ppo loss} using GAE advantage estimation. 
Next, we train the recovery policy through behavior cloning, utilizing recovery demonstrations. The training process during the offline phase mirrors that of the task policy, beginning with behavior cloning and then fine-tuning via Uni-O4 in an offline context. However, we avoid fine-tuning the recovery policy during the online phase due to the limited availability of failure data. We observed that this enhances the safe exploration of the task policy in real-world environments.

We also pre-train a world model with both task and failure demonstrations, specifically for predicting future failures. The world model focuses on a limited number of near-future steps rather than the entire episode in recovery reinforcement learning~\cite{thananjeyan2021recovery}. Our findings indicate that planning a short distance into the future can effectively minimize \failures in scenarios of real-world manipulation. To achieve this, we augment the world model for failure prediction by introducing a constraint prediction head. The training objective of our world model can be defined as:
\begin{align}
    \label{eq:wm}
    \mathcal{J}(\theta;\Gamma)=\sum_{i=t}^{t+H}\lambda^{i-t}\cdot\mathcal{L}(\theta;\Gamma_i),
\end{align}
which can be computed step-wise as follows:
\begin{align}
\label{eq:wm_each}
 \mathcal{L}(\theta;&\Gamma_i) =  \ c_1 
 \underbrace{\|R_\theta(\mathbf{z}_i,\mathbf{a}_i) - r_i\|_2^2}_{\mathrm{reward}} \\
& + c_2 \cdot \underbrace{\text{CE}(Q_\theta(\mathbf{z}_i,\mathbf{a}_i) - \left(r_i + \gamma Q_{\theta^-}(\mathbf{z}_{i+1}, \pi_\theta(\mathbf{z}_{i+1}))\right))}_{\mathrm{value}} \\
& + c_3 \cdot \underbrace{\text{CE}(d_\theta(\mathbf{z}_i,\mathbf{a}_i) - h_{\theta^-}(\mathbf{s}_{i+1}))}_{\text{latent state consistency}}\\
& + c_4 \underbrace{\|C_\theta(\mathbf{z}_i,\mathbf{a}_i) - c_i\|_2^2}_{\mathrm{constraint}}\\
& + c_5 \underbrace{\|S_\theta(\mathbf{z}_i) - s_i\|_2^2}_{\mathrm{decoder}},
\end{align}

where the model consists of seven components:
\vspace{-1.5em}

\begin{equation}
\vspace{-0.5em}
\footnotesize
\begin{aligned}
\text{Representation:} & \quad \mathbf{z}_{t} = h_{\theta}\left(\mathbf{s}_{t}\right) \\
\text{Latent dynamics:} & \quad \mathbf{z}_{t+1} = d_{\theta}\left(\mathbf{z}_{t}, \mathbf{a}_{t}\right) \\
\text{Reward:} & \quad \hat{r}_{t} = R_{\theta}\left(\mathbf{z}_{t}, \mathbf{a}_{t}\right) \\
\text{Value:} & \quad \hat{q}_{t} = Q_{\theta}\left(\mathbf{z}_{t}, \mathbf{a}_{t}\right) \\
\end{aligned}
\qquad
\begin{aligned}
\text{Policy:} & \quad \hat{\mathbf{a}}_{t} \sim \pi_{\theta}\left(\mathbf{z}_{t}\right) \\
\text{Decoder:} & \quad \hat{\mathbf{s}}_{t} \sim S_{\theta}\left(\mathbf{z}_{t}\right) \\
\text{Constraint:} & \quad \hat{\mathbf{c}}_{t} \sim C_{\theta}\left(\mathbf{z}_{t}, \mathbf{a}_{t}\right) \\
\end{aligned}
\end{equation}

where $s_t$ is the observation at time-step $t$, which is encoded to a representation $\mathbf{z}$ by an encoder $h_{\theta}$. Conditioned on the representation, each head of the dynamics model predicts the representation of the next state, the reconstructed state, the constraint, a single-step reward, a state-action $Q$-value, and an action that maximizes the $Q$-function.

We incorporate value and reward signals during training to enrich latent representations while providing gradient-based regularization that prevents overfitting to constraint patterns.

After pre-training, we infer the discounted near-future constraint of $\pi_{task}$ mentioned in Eq. \eqref{eq:c_pi_h} as:
\begin{equation}
\begin{aligned}
C_H^{\pi} = \mathbb{E}_{\boldsymbol{\pi}}\bigg[ & \sum_{i=t}^{t+H} \boldsymbol{\gamma}_\mathrm{risk}^{i-t}\big[ C_{\theta}(z_{i}, a_i) \,|\, z_i=h_{\theta}(s_i); \\
 & a_i=\pi_{task}(S_{\theta}(z_i)); z_{i+1}=d_{\theta}(z_i,a_i) \big] \bigg]
\end{aligned}
\end{equation}

\subsection{Online fine-tuning with recovery}

The task policy is fine-tuned by online PPO with the policy and value network initialization from offline pre-training, following Uni-O4. However, we consider the safe exploration setting, in which we require each state-action tuple $(s,a)$  to be safe. Let task policy $\pi_{task}$ interact with the environments, each state action tuple will be checked by the world model to consider safety in the near-future steps through planning. We define the task transitions as $\mathcal{T}_{task}^{\pi}=(s_t, a_t^{\pi_{task}},s_{t+1},r_t)$, where $(s,a) \in \mathcal{S}\times\mathcal{A}:C_H^{\pi_{task}} \leq \boldsymbol{\varepsilon}_{\mathrm{safe}}$ and the recovery transitions $\mathcal{T}_{rec}^{\pi}=(s_t, a_t^{\pi_{rec}},s_{t+1},r_t)$, where $(s,a) \in \mathcal{S}\times\mathcal{A}:C_H^{\pi_{task}} \geq \boldsymbol{\varepsilon}_{\mathrm{safe}}$. In other words, if the action sampled from the task policy $\pi_{task}$ under state $s$ does not satisfy $C_H^{\pi_{task}} \leq \boldsymbol{\varepsilon}_{\mathrm{safe}}$, we would revise the task transition to the recovery transition via the action $a_{rec}$ sampled from $\pi_{rec}$. The workflow is also described in the \textit{safe exploration} area of the online phase in Figure \ref{fig:teaser}. The safe transitions could be defined as:
\begin{equation}
    \mathcal{T}_{\mathrm{safe}}^{\pi}=
\begin{cases}
\mathcal{T}_{\mathrm{task}}^{\pi}, & C_H^{\pi_{task}} \leq \boldsymbol{\varepsilon}_{\mathrm{safe}}\\
\mathcal{T}_{\mathrm{rec}}^{\pi}, & C_H^{\pi_{task}} > \boldsymbol{\varepsilon}_{\mathrm{safe}}
\end{cases}
\end{equation}
Based on the safe transitions, we fine-tune the task policy using Objective \ref{eq:ppo loss} with GAE advantage estimation.

\section{Theoretical Analysis}

We provide theoretical justification for \ours's superior performance through an "action correction" mechanism.

\subsection{Preliminaries}

Let $A^{\pi_{task}}(s, a)$ denote the GAE-based advantage function under policy $\pi_{task}$. We classify states based on H-step constraint violation:
$$\mathcal{S}_{rec} = \{s \in \mathcal{S} \mid \exists a \sim \pi_{task}(\cdot|s): C_H^{\pi_{task}}(s, a) > \varepsilon_{safe}\}$$

We classify actions based on H-step constraint violation at the action level:
$$\mathcal{A}_{risk}(s) = \{a \in \mathcal{A} \mid C_H^{\pi_{task}}(s, a) > \varepsilon_{safe}\}$$
$$\mathcal{A}_{safe}(s) = \{a \in \mathcal{A} \mid C_H^{\pi_{task}}(s, a) \leq \varepsilon_{safe}\}$$
The risk probability at state $s$ is defined as:
$$p_{risk}(s) = \mathbb{P}_{a \sim \pi_{task}(\cdot|s)}[a \in \mathcal{A}_{risk}(s)]$$

\subsection{Assumptions}

\begin{assumption}[Non-trivial Risk Distribution]
\label{assum:risk}
There exists a non-negligible fraction of states where the task policy samples risky actions:
$$\mathbb{E}_{s \sim \rho_{\pi_{task}}}[p_{risk}(s)] > 0$$
where $\rho_{\pi_{task}}$ is the state visitation distribution under policy $\pi_{task}$.
\end{assumption}

\begin{assumption}[Probabilistic Safe Recovery]
\label{assum:recovery}
For any state $s$, the recovery policy provides safe actions with high probability:
$$\mathbb{P}_{a \sim \pi_{rec}(\cdot|s)}[a \in \mathcal{A}_{safe}(s)] \geq 1-\epsilon_{rec}$$
where $\epsilon_{rec} > 0$ is the recovery failure rate.
\end{assumption}

\begin{assumption}[Safe Action Advantage]
\label{assum:advantage}
For states where both safe and risky actions exist, safe actions provide better expected advantage:
$$\mathbb{E}_{a \in \mathcal{A}_{safe}(s)}[A^{\pi_{task}}(s, a)] \geq \mathbb{E}_{a \in \mathcal{A}_{risk}(s)}[A^{\pi_{task}}(s, a)] + \delta$$
where $\delta > 0$ represents the advantage gap between safe and risky actions.
\end{assumption}

\subsection{Main Result}
\begin{theorem}[Action Correction Benefit]
\label{thm:main}
Under Assumptions~\ref{assum:risk}-\ref{assum:advantage}, the policy improvement from \ours's corrected transitions satisfies:

\begin{equation}
\begin{split}
    \Delta J_{\ours} \geq & \Delta J_{baseline} \\
    & + \mathbb{E}_{s \sim \rho_{\pi_{task}}}[p_{risk}(s)] \cdot \delta \cdot (1-\epsilon_{rec}) - O(\epsilon_{rec})
\end{split}
\end{equation}
where $p_{risk}(s) = \mathbb{P}_{a \sim \pi_{task}}[a \in \mathcal{A}_{risk}(s)]$ is the probability of sampling risky actions at state $s$.
\end{theorem}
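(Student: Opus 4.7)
The plan is to reduce $\Delta J_{\ours} - \Delta J_{baseline}$ to a pointwise comparison of expected advantage at the action level. Both quantities are, up to standard trust-region $O(\epsilon^2)$ terms, surrogate objectives of the PPO form in Eq.~\eqref{eq:ppo loss}, which can be written as expectations of $A^{\pi_{task}}(s,a)$ with $s \sim \rho_{\pi_{task}}$ and $a$ drawn from the action distribution actually executed in the environment: $\pi_{task}$ for the baseline, and the corrected distribution $\pi_{safe}$ for \ours. Here $\pi_{safe}(\cdot|s)$ is precisely $\mathcal{T}_{\text{safe}}^{\pi}$ restricted to its action marginal: sample $a \sim \pi_{task}(\cdot|s)$, and if $a \in \mathcal{A}_{risk}(s)$, replace it by $a' \sim \pi_{rec}(\cdot|s)$. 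Consequently,
$$\Delta J_{\ours} - \Delta J_{baseline} \;\approx\; \mathbb{E}_{s \sim \rho_{\pi_{task}}}\!\left[\mathbb{E}_{a \sim \pi_{safe}(\cdot|s)}[A^{\pi_{task}}(s,a)] - \mathbb{E}_{a \sim \pi_{task}(\cdot|s)}[A^{\pi_{task}}(s,a)]\right].$$

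Next I would do a state-by-state accounting. Fix $s$ and split $\mathbb{E}_{a \sim \pi_{task}}[A^{\pi_{task}}(s,a)]$ into the safe portion (mass $1-p_{risk}(s)$) and the risky portion (mass $p_{risk}(s)$). Under $\pi_{safe}$, the safe portion is unchanged, while the risky portion is replaced by a draw from $\pi_{rec}(\cdot|s)$. By Assumption~\ref{assum:recovery}, that draw lies in $\mathcal{A}_{safe}(s)$ with probability at least $1-\epsilon_{rec}$; on this successful-recovery event, Assumption~\ref{assum:advantage} supplies an advantage gain of at least $\delta$ over the expected risky-action advantage. On the failure event (mass $\leq \epsilon_{rec}$), I would bound the advantage change by a uniform constant (the standard bound $|A^{\pi_{task}}| \leq R_{\max}/(1-\gamma)$), so this contribution is absorbed into $O(\epsilon_{rec})$. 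Summing the per-state contribution, the integrand is lower-bounded by $p_{risk}(s)\cdot\delta\cdot(1-\epsilon_{rec}) - O(\epsilon_{rec})$, and taking $\mathbb{E}_{s\sim\rho_{\pi_{task}}}$ yields exactly the inequality claimed in Theorem~\ref{thm:main}.

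The main obstacle is rigorously handling the distribution shift introduced by the correction: executing $\pi_{safe}$ rather than $\pi_{task}$ induces a different on-policy state distribution, so the identity $\Delta J \approx \mathbb{E}_{\rho_{\pi_{task}}}[A^{\pi_{task}}]$ is only a first-order (trust-region) approximation. I would absorb this through PPO's ratio-clipping in Eq.~\eqref{eq:ppo loss}, which keeps consecutive on-policy state distributions within a small total-variation ball, so that the resulting second-order terms are of smaller order than the linear gain $\delta\cdot(1-\epsilon_{rec})$ and can be rolled into the $O(\epsilon_{rec})$ slack. A secondary subtlety is that Assumption~\ref{assum:advantage} is only informative on states where both $\mathcal{A}_{safe}(s)$ and $\mathcal{A}_{risk}(s)$ have positive measure; states with $p_{risk}(s)=0$ contribute zero to the gap on both sides, so the assumption is invoked only where it is meaningful, which I would flag in a short remark before the final inequality. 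Assumption~\ref{assum:risk} then ensures the correction term $\mathbb{E}_{s}[p_{risk}(s)]\cdot\delta\cdot(1-\epsilon_{rec})$ is strictly positive and the improvement is non-vacuous.
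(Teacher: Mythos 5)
Your proposal follows essentially the same route as the paper's proof sketch: the identical per-state decomposition into a safe mass $(1-p_{risk}(s))$ and a risky mass $p_{risk}(s)$ whose draws are replaced by $\pi_{rec}$, with Assumption~\ref{assum:recovery} supplying the $(1-\epsilon_{rec})$ success probability, Assumption~\ref{assum:advantage} supplying the $\delta$ gap, and a final average over $\rho_{\pi_{task}}$. The only difference is that you make explicit two points the paper leaves implicit --- the uniform advantage bound absorbing the recovery-failure event into $O(\epsilon_{rec})$, and the first-order/trust-region nature of equating $\Delta J$ with the expected advantage under the unshifted state distribution --- which is a welcome tightening but not a different argument.
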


\begin{proof}[Proof Sketch]
The baseline improvement is $\Delta J_{baseline} = \mathbb{E}_{s,a \sim \pi_{task}}[A^{\pi_{task}}(s,a)]$. 

\ours's action correction yields expected advantage:
\begin{align}
\mathbb{E}_{\ours}[A^{\pi_{task}}(s,a)] &= (1-p_{risk}(s)) \mathbb{E}_{a \in \mathcal{A}_{safe}(s)}[A^{\pi_{task}}(s,a)] \nonumber \\
&\quad + p_{risk}(s) \mathbb{E}_{a \sim \pi_{rec}}[A^{\pi_{task}}(s,a)]
\end{align}

The improvement per state is:
\begin{align}
&\mathbb{E}_{\ours}[A^{\pi_{task}}(s,a)] - \mathbb{E}_{\pi_{task}}[A^{\pi_{task}}(s,a)] \nonumber \\
&= p_{risk}(s) \left(\mathbb{E}_{a \sim \pi_{rec}}[A^{\pi_{task}}(s,a)] - \mathbb{E}_{a \in \mathcal{A}_{risk}(s)}[A^{\pi_{task}}(s,a)]\right)
\end{align}

Under Assumptions 2-3, this difference is at least $p_{risk}(s) \cdot \delta \cdot (1-\epsilon_{rec}) - O(\epsilon_{rec})$. Averaging over states yields the result.
\end{proof}

\subsection{Discussion}
The improvement bound $\mathbb{E}_{s \sim \rho_{\pi_{task}}}[p_{risk}(s)] \cdot \delta \cdot (1-\epsilon_{rec}) - O(\epsilon_{rec})$ shows \ours gains most when: (1) risky states are frequent ($\mathbb{E}_s[p_{risk}(s)]$ large), (2) safe actions significantly outperform risky ones ($\delta$ large), and (3) recovery demonstrations are high-quality ($\epsilon_{rec}$ small). 

This explains \ours's dual benefit: improved safety and enhanced performance by selectively replacing risky actions with recovery alternatives when necessary.

\section{Experiments}

\subsection{FailureBench: Simulation Benchmark for Evaluating Failure-Aware RL}

To evaluate failure-aware RL algorithms, we introduce FailureBench, a benchmark suite comprising modified versions of the MetaWorld environment~\cite{yu2019meta}. FailureBench is designed to incorporate realistic failure scenarios that typically require human intervention in real-world settings. We categorize these failure scenarios into four representative tasks:

\begin{itemize}
    \item \textbf{Sawyer Bounded Push}. The robot must push a puck to the target while keeping it within a bounded workspace. If it pushes the object beyond the boundary, it is considered an \failures[single], simulating scenarios where objects become unreachable or fall to the ground, necessitating human intervention.
    \item \textbf{Sawyer Bounded Soccer}. The robot must hit a ball into a goal while keeping it within a boundary. The dynamic nature of the rolling ball makes it more prone to \failures.
    \item \textbf{Sawyer Fragile Push Wall}. The robot must push an fragile object to a target position behind a wall. If the object collides with the wall, it is considered an \failures[single], simulating real-world scenarios where fragile objects would be damaged upon collision and require replacement.
    \item \textbf{Sawyer Obstructed Push}. The robot must navigate around a fragile vase while pushing the object to its goal location. Any collision with the vase is treated as an \failures[single].
\end{itemize}

\begin{figure}[t]
\vspace{0.5em}
    \centering
    \begin{subfigure}[b]{0.115\textwidth}
        \centering
        \includegraphics[width=\textwidth]{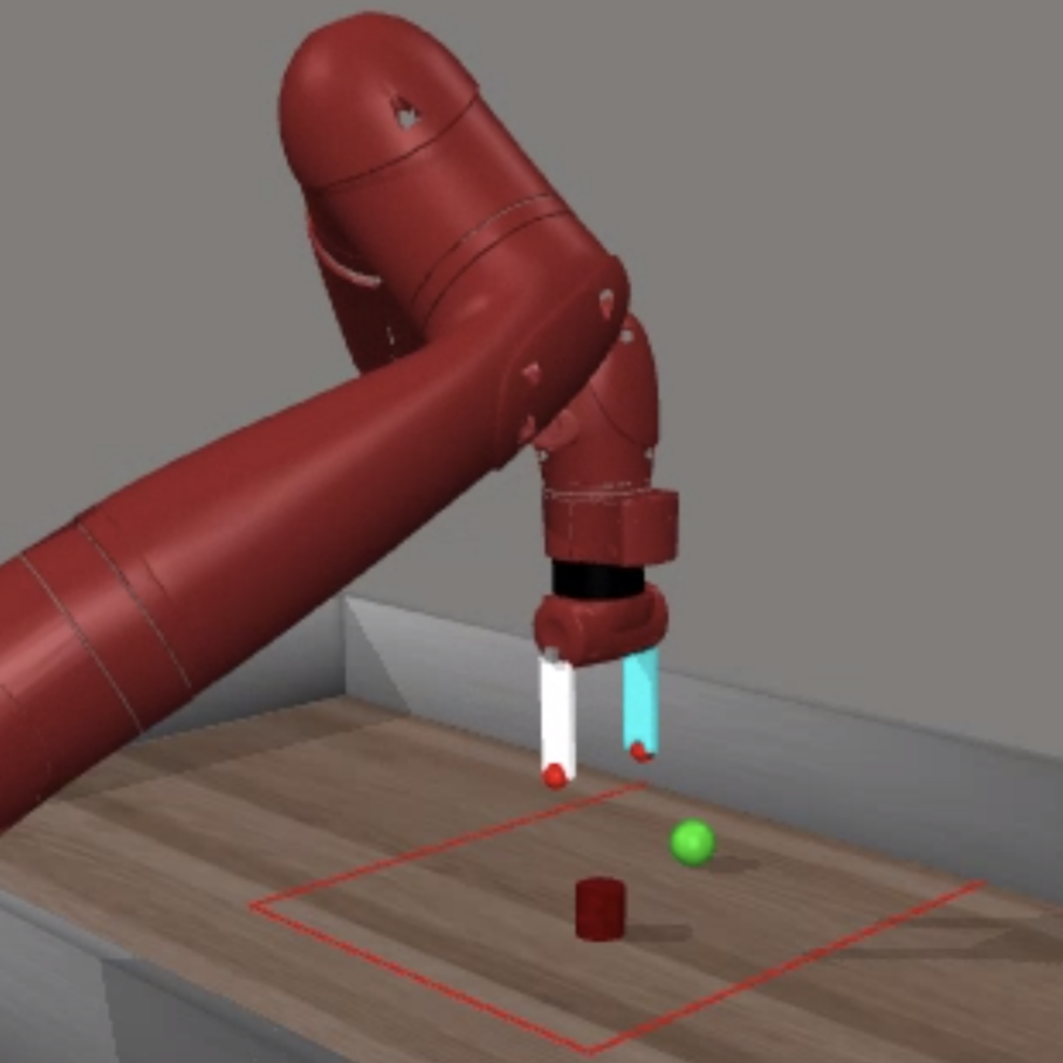}
        \label{fig:task1}
    \end{subfigure}
    \hfill
    \begin{subfigure}[b]{0.115\textwidth}
        \centering
        \includegraphics[width=\textwidth]{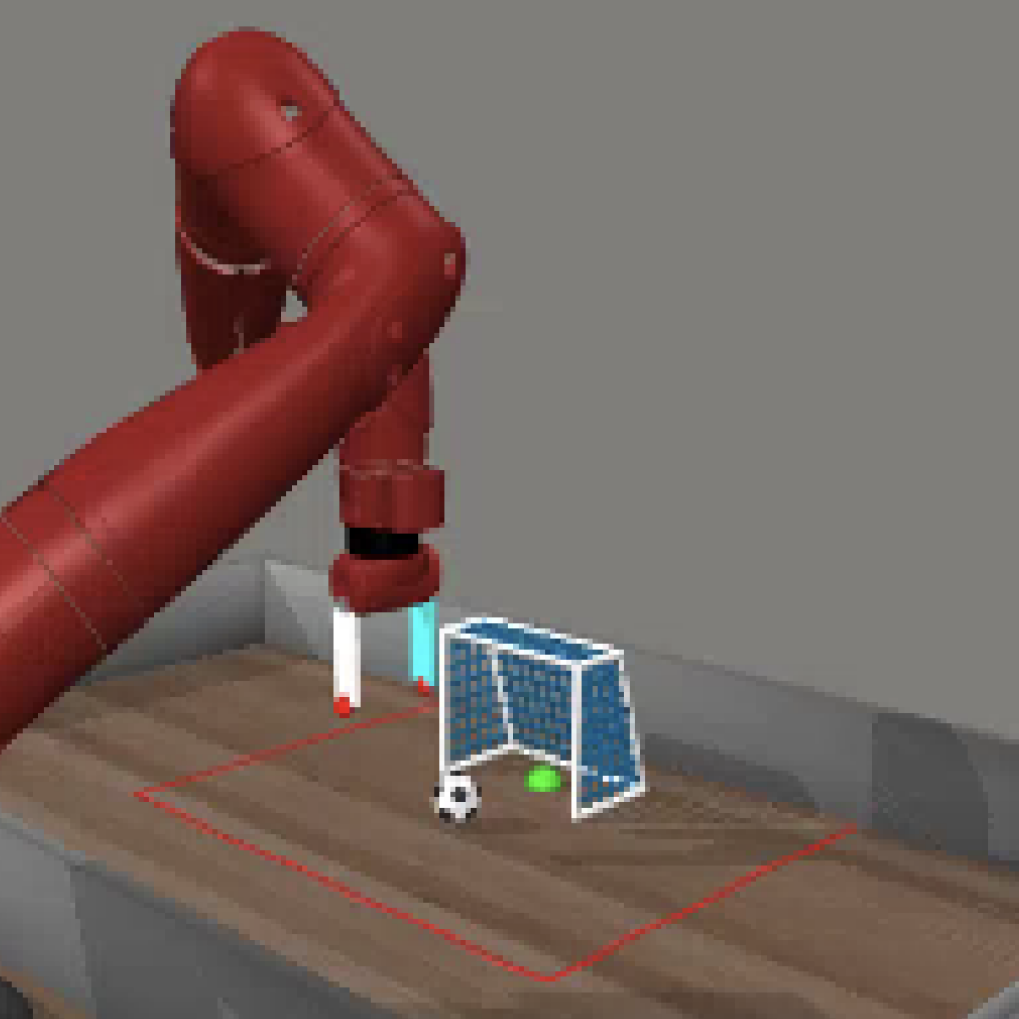}
        \label{fig:task2}
    \end{subfigure}
    \hfill
    \begin{subfigure}[b]{0.115\textwidth}
        \centering
        \includegraphics[width=\textwidth]{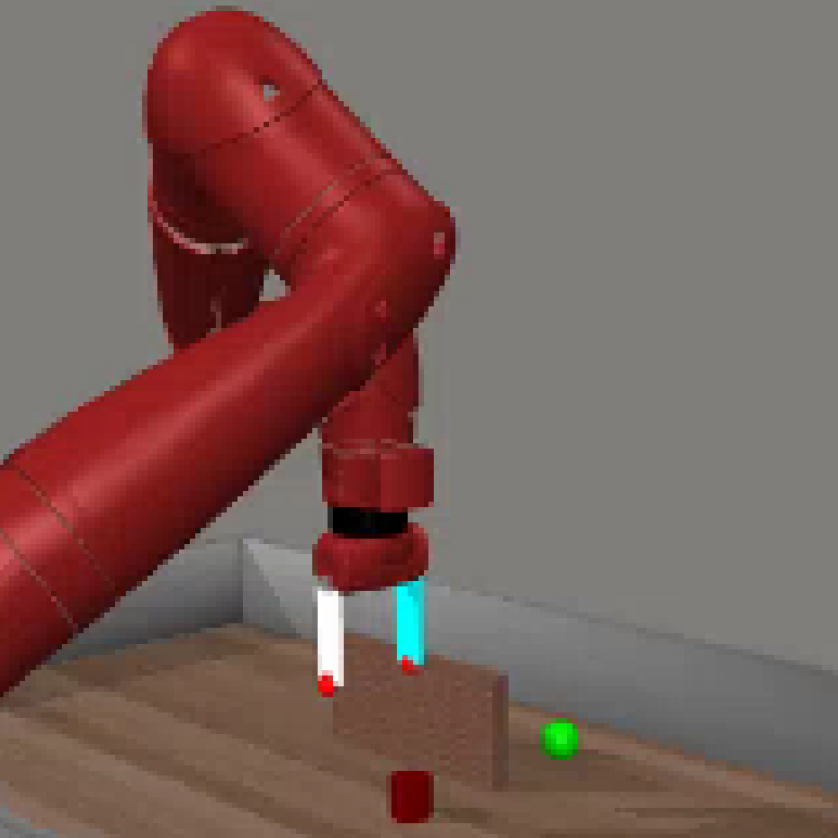}
        \label{fig:task3}
    \end{subfigure}
    \hfill
    \begin{subfigure}[b]{0.115\textwidth}
        \centering
        \includegraphics[width=\textwidth]{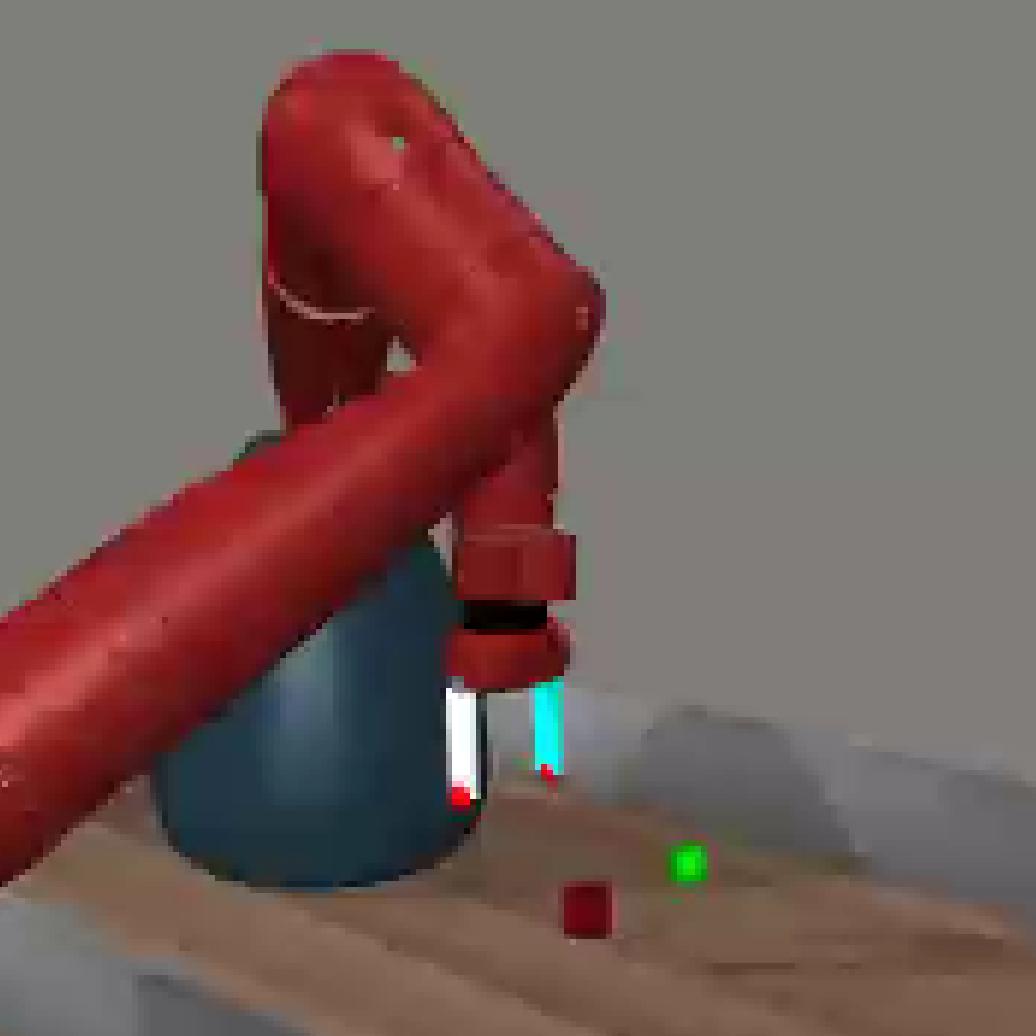}
        \label{fig:task4}
    \end{subfigure}
    \vspace{-1em}
    \caption{Illustration of the four tasks in our FailureBench. From left to right: Bounded Push, Bounded Soccer, Fragile Push Wall, and Obstructed Push.}
    \label{fig:tasks}
    
    \vspace{-1.5em}
    
\end{figure}

\subsection{Simulation Experiments}
We conduct extensive experiments using our proposed FailureBench to evaluate the effectiveness of \ours. We compare \ours against the baseline Uni-O4 algorithm~\cite{lei2024unio}, which represents the state-of-the-art offline-to-online RL approach but without explicit failure avoidance mechanisms.
Additionally, we compare our method with three state-of-the-art online safe RL algorithms, i.e., PPO-Lagrangian~\cite{ray2019benchmarking}, P3O~\cite{p3o}, and CPO~\cite{cpo}, where each is used to fine-tune the same offline pre-trained policy.

\subsubsection{Experimental Setup}

For each environment in FailureBench, we collect three types of demonstration data for offline pre-training:
\begin{itemize}
    \item \textbf{Task demonstrations}: 20 medium expert trajectories for each task collected using BAC~\cite{ji2024seizingserendipityexploitingvalue}, representing sub-optimal demonstration for manipulation tasks.
    \item \textbf{Recovery demonstrations}: 120 trajectories demonstrating recovery behaviors from near-failure states collected with script policy.
    \item \textbf{Failure demonstrations}: 
    20k–200k transition sequences containing failures for world model training. Specifically, we simulate online exploration by adding controlled stochastic noise during the deployment of the offline pre-trained policy. We then collect the failing trajectories, which match the distribution of failures likely to occur during online fine-tuning.

\end{itemize}

We evaluate performance using two key metrics:
\begin{itemize}
    \item \textbf{Average Return}: The mean episodic reward improvement after $10^6$ steps of online finetuning.
    \item \textbf{Failure Episodes}: The number of episodes containing \failures during $10^6$ steps of online finetuning.
\end{itemize}

\subsubsection{Results and Analysis}

Figure \ref{fig:failure_comparison} compares failure episodes between our method \ours and the baseline Uni-O4 across all four environments. Our approach consistently encounters significantly fewer failure episodes with an average reduction of \textbf{43.6\%} across all tasks and up to \textbf{65.8\%} in the most challenging environments. The larger improvements in highly dynamic environments (Bounded Soccer) and complex constraint scenarios (Obstructed Push) highlight our method's strength in handling challenging interaction dynamics.

\begin{figure}[!htbp]
    \centering
    \includegraphics[width=0.9\columnwidth]{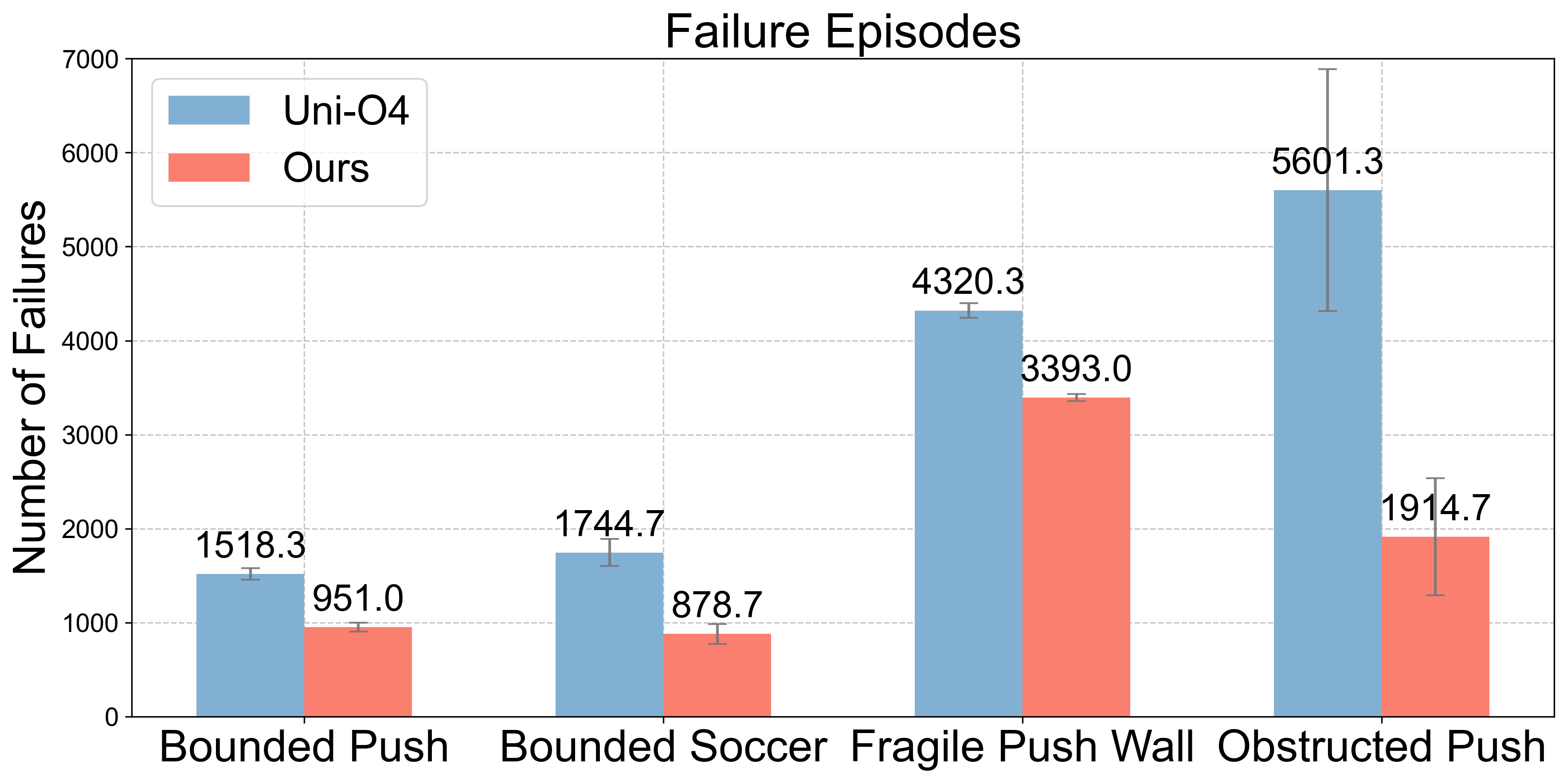}
    \caption{Comparison of average failure episodes during fine-tuning for Uni-O4 (blue) and our method (red) in FailureBench.}
    \label{fig:failure_comparison}
\end{figure}

\begin{figure}[!htbp]

    \centering
    \includegraphics[width=0.9\columnwidth]{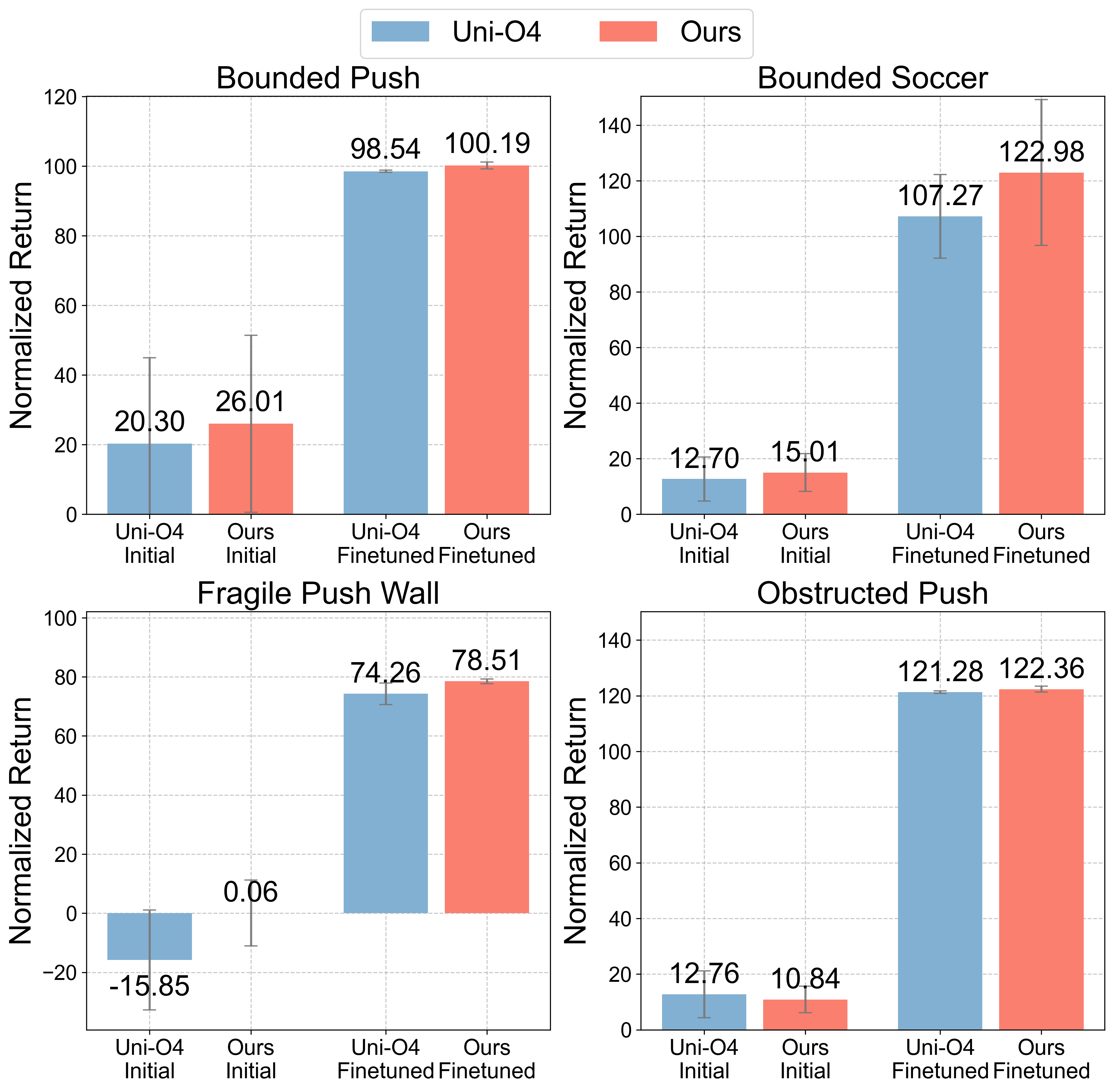}
    \caption{Performance comparison between Uni-O4 (blue) and our method (red) in FailureBench. The bars show average return before and after fine-tuning. All returns are normalized relative to an expert script policy's performance (100).}
    \label{fig:performance_comparison}

\vspace{-1em}

\end{figure}

Analysis of average returns (Figure \ref{fig:performance_comparison}) reveals that despite the additional constraints imposed by our failure-avoiding mechanisms, our approach achieves competitive or higher performance in terms of task rewards. This demonstrates that our method successfully balances exploration and safety, achieving effective learning without sacrificing performance.

Traditional safe RL methods significantly underperform when applied to offline-to-online scenarios, despite careful hyperparameter tuning. Table \ref{tab:safe_rl_comparison} shows that \ours outperforms these approaches by an average of over 800\% across tasks, suggesting a fundamental incompatibility between standard safe RL algorithms and pre-trained policy initialization. This performance gap likely stems from their optimization objectives creating distributional shifts that prevent effective utilization of pre-trained knowledge.

\begin{table}[!htbp]
\centering
\caption{Performance comparison after fine-tuning from the same offline pre-trained policy. Values represent average returns across three seeds.}
\label{tab:safe_rl_comparison}
\resizebox{\columnwidth}{!}{
\begin{tabular}{lcccc}
\toprule
\textbf{Method} & \textbf{Bounded Push} & \textbf{Bounded Soccer} & \textbf{Fragile Push Wall} & \textbf{Obstructed Push} \\
\midrule
\ours (Ours) & \textbf{4593.96 $\pm$ 50.69} & \textbf{2276.53 $\pm$ 554.32} & \textbf{3220.12 $\pm$ 66.67} & \textbf{1227.18 $\pm$ 12.40} \\
PPO-Lag & 420.79 $\pm$ 841.07 & 404.80 $\pm$ 334.67 & 268.83 $\pm$ 695.50 & 543.95 $\pm$ 387.59 \\
P3O & 95.33 $\pm$ 61.51 & 598.55 $\pm$ 653.84 & -278.62 $\pm$ 530.71 & 455.86 $\pm$ 348.82 \\
CPO & 156.28 $\pm$ 212.07 & 692.90 $\pm$ 267.52 & -994.55 $\pm$ 1139.89 & 47.40 $\pm$ 37.26 \\
\bottomrule
\end{tabular}
}
\end{table}

\subsubsection{Ablation Studies}
To better understand the contribution of each component in \ours, we conduct two key ablation studies focused on our failure prediction and avoidance mechanisms:

\noindent \textbf{Study 1. World Model vs. Recovery-RL Safety Critic}

First, we investigate whether our world-model-based safety critic with future rollout prediction is necessary by replacing it with a Q-function safety critic from Recovery-RL~\cite{thananjeyan2021recovery}, which is based on a simple MLP.
This approach estimates the discounted future probability of constraint violation under the current policy:
\begin{align}
    \label{eq:q-safe-def}
    &Q_{\text{safe}}(s_t, a_t) = \mathbb{E}_\pi\left[\sum_{t'=t}^\infty \gamma_{\text{safe}}^{t'-t} c_{t'}|s_t, a_t\right]\\
    &= c_t + (1 - c_t)\gamma_{\text{safe}}\mathbb{E}_\pi\left[Q_{\text{safe}}(s_{t+1}, a_{t+1})|s_t, a_t\right].  
\end{align}

\noindent \textbf{Study 2. Recovery Policy vs. MPPI Planning}

Second, we evaluate the necessity of a separate pre-trained recovery policy by replacing it with Model Predictive Path Integral (MPPI)~\cite{williams2015modelpredictivepathintegral}, a sampling-based model predictive control method. Our implementation plans actions by:

\begin{itemize}
    \item Sampling multiple action trajectories from a Gaussian distribution.
    \item Computing rewards and constraints for each trajectory using the world model.
    \item Filtering trajectories based on a safety threshold ($\varepsilon_{\text{safe}}$).
    \item Selecting the best trajectory among safe options according to expected returns.
\end{itemize}

\noindent \textbf{Result Analysis}

Table \ref{tab:ablation_failure_comparison} presents the failure episode results of our ablation studies. The performance across different variants reveals several interesting insights:

Replacing our world-model-based safety critic with the simple critic from Recovery-RL~\cite{thananjeyan2021recovery} leads to a substantial increase in failure episodes in all environments, which is the most pronounced in Bounded Soccer with 92\% more failures, where complex dynamics make accurate long-term risk assessment crucial. This confirms that our approach's explicit modeling of future state-action sequences provides superior prediction of potential failures compared to a simple Q-function estimator.

MPPI planning underperforms our learned recovery policy, particularly in environments with complex dynamics. This suggests that planning-based methods that implicitly infer recovery action from the world model encounter difficulties in highly dynamic environments. Our recovery policy leverages expert demonstrations of recovery behaviors that would be difficult to discover through planning alone, providing a significant advantage in handling complex failure scenarios.

These results validate our design choices and demonstrate that our combined approach of a world-model-based safety critic and a pre-trained recovery policy provides the most robust performance across diverse failure scenarios.

The return comparison in Figure \ref{fig:ablation_return_comparison} further illustrates that our \ours consistently achieves superior final performance after fine-tuning.

\begin{figure}[!htbp]
    \begin{minipage}{\columnwidth}
        \centering
        \captionof{table}{Comparison of Average Failure Episodes across FailureBench environments.}
        \label{tab:ablation_failure_comparison}
        \resizebox{\columnwidth}{!}{%
        \begin{tabular}{lccc}
        \toprule
        \textbf{Failure Episodes $\downarrow$} & \textbf{\ours (Ours)} & \textbf{Rec-RL} & \textbf{MPPI} \\
        \midrule
        Bounded Push & $\mathbf{951.00 \pm 52.37}$ & $1250.67 \pm 293.33$ & $1112.00 \pm 180.01$ \\
        Bounded Soccer & $\mathbf{878.67 \pm 120.65}$ & $1687.00 \pm 130.85$ & $2022.67 \pm 279.74$ \\
        Fragile Push Wall & $\mathbf{3393.00 \pm 42.51}$ & $4234.00 \pm 56.93$ & $3753.00 \pm 13.23$ \\
        Obstructed Push & $\mathbf{1914.67 \pm 711.46}$ & $2175.00 \pm 128.42$ & $2229.00 \pm 727.12$ \\
        \bottomrule
        \end{tabular}%
        }
    \end{minipage}

    \vspace{1em}
    
    \begin{minipage}{\columnwidth}
        \centering
        \includegraphics[width=0.9\columnwidth]{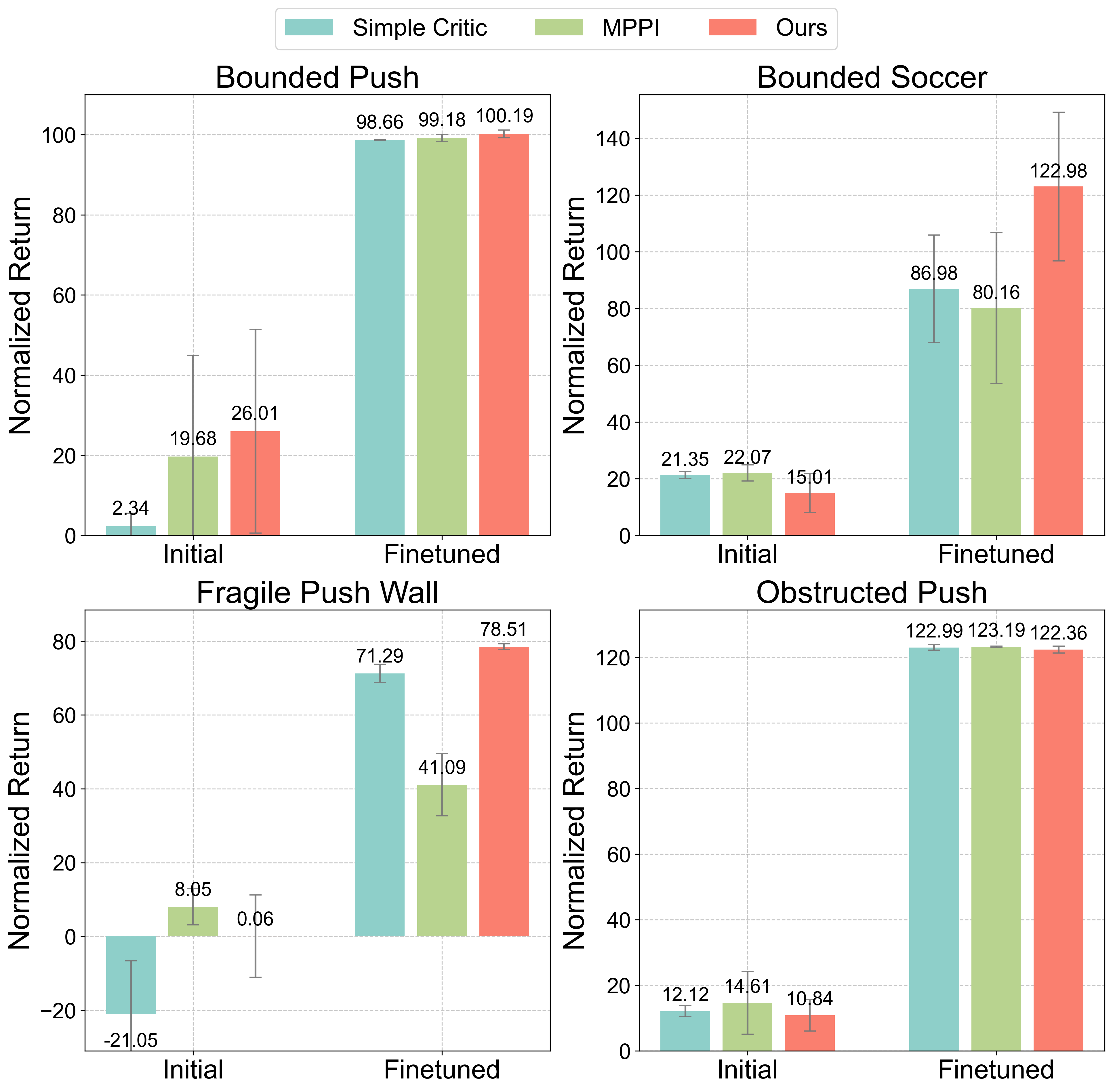}
        \caption{Comparison of average returns before and after fine-tuning for our method and the two ablation variants. All returns are normalized relative to an expert script policy's performance (100).}
        \label{fig:ablation_return_comparison}
    \end{minipage}
\end{figure}

\begin{table}[!htbp]
\centering
\caption{Average return comparison for three Franka tasks.}
\scriptsize
\label{tab:combined_returns}
\begin{tabular}{llcc}
\toprule
\textbf{Task} & \textbf{Method} & \textbf{Initial} & \textbf{Finetuned} \\
\midrule
\multirow{2}{*}{\begin{tabular}[c]{@{}l@{}}Franka Fragile \\ Push Wall\end{tabular}} & Uni-O4 & 343.28 $\pm$ 121.23 & \textbf{369.15 $\pm$ 8.93} \\
 & \ours (Ours) & 349.95 $\pm$ 123.12 & \textbf{363.65 $\pm$ 14.38} \\
\midrule
\multirow{2}{*}{\begin{tabular}[c]{@{}l@{}}Franka Disturbed \\ Push\end{tabular}} & Uni-O4 & 262.62 $\pm$ 129.59 & \textbf{308.29 $\pm$ 79.83} \\
 & \ours (Ours) & 321.49 $\pm$ 149.57 & \textbf{384.52 $\pm$ 87.45} \\
\midrule
\multirow{2}{*}{\begin{tabular}[c]{@{}l@{}}Franka Bounded \\ Soccer\end{tabular}} & Uni-O4 & 512.79 $\pm$ 143.34 & \textbf{615.97 $\pm$ 103.20} \\
 & \ours (Ours) & 578.77 $\pm$ 133.80 & \textbf{638.96 $\pm$ 91.59} \\
\bottomrule
\end{tabular}
\end{table}

\subsection{Real-World Experiments}

To validate the practical effectiveness of \ours, we deploy \ours on a Franka Emika Panda robot to evaluate its performance in challenging real-world scenarios. We implement three representative tasks and assess both failure reduction and performance improvement (Figure \ref{fig:mainteaser}).

\subsubsection{Experimental Setup}

\begin{itemize}
    \item \textbf{Franka Fragile Push Wall.} The robot must push an object, which is assumed to be fragile, to a target position behind a wall. If the object collides with the wall during manipulation, it is considered damaged and requires replacement, indicating an \failures[single] that necessitates human intervention.
    
    \item \textbf{Franka Disturbed Push.} The robot must push an object to a target while avoiding a dynamic obstacle (a decorative flower) that is randomly moved by a human operator, simulating unpredictable environmental changes.

    \item \textbf{Franka Bounded Soccer.} The robot must use a UMI gripper~\cite{chi2024universal} to "kick" a ball toward a target on an uneven turf surface. If the ball rolls beyond the defined boundaries due to the irregular surface dynamics, it is considered an \failures[single].
    
\end{itemize}

We use an Intel RealSense D435 camera for real-time image acquisition. State estimation is performed using YOLOv8 object detection and color-based filtering to track the positions of objects in the workspace. The system operates at approximately 5Hz for both perception and control.

For each task, we conduct training sessions consisting of 50 episodes and compared \ours against the baseline algorithm. We collect 40-80 trajectories for three types of demonstration via teleoperation. Each demonstration type requires approximately 10-20 minutes of teleoperation, resulting in a total human effort of 30-60 minutes per task for collecting all three demonstration types. For the failure demonstrations, specifically, we guide the robot via teleoperation into various failure states that the robot would likely encounter during autonomous execution, providing representative data for safety critic pre-training.

\subsubsection{Results and Analysis}

\begin{figure}[t]
    \centering
    \includegraphics[width=0.9\columnwidth, trim=0 5 0 -5, clip]{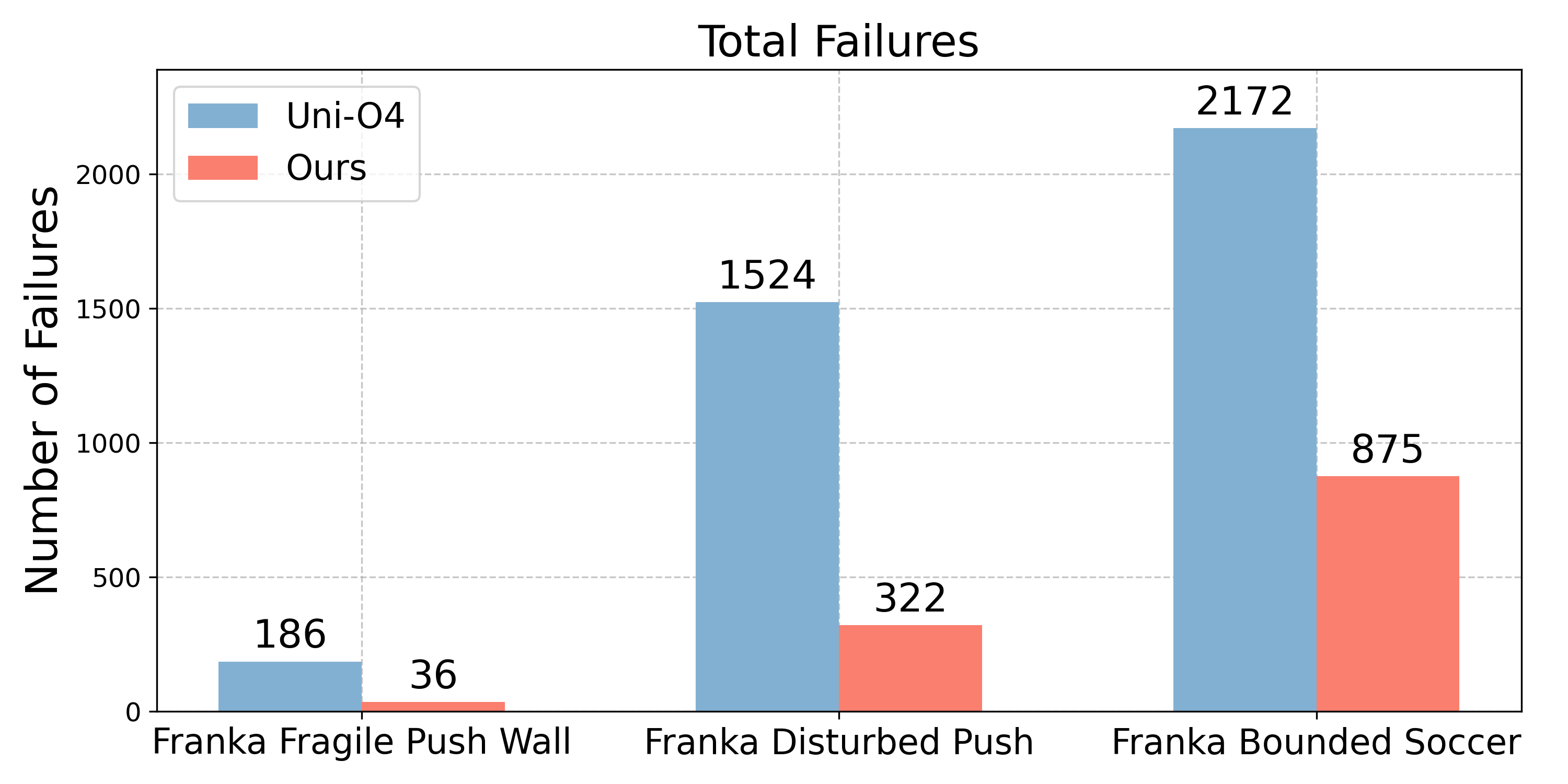}
    \caption{Comparison of total failure state-action pairs during 50 training episodes in real-world experiments.}
    \label{fig:real_failures}

\end{figure}

Figure \ref{fig:real_failures} presents a comparison of total failures during the 50-episode training phase for all environments. The results demonstrate a dramatic reduction in failures when using \ours compared to the baseline.

Table \ref{tab:combined_returns} shows the average return achieved by each method before and after fine-tuning, which further highlights the necessity and superiority of online RL finetuning. Online RL finetuning not only boosts task performance but also significantly decreases the standard deviation, indicating more robust policies. In the Disturbed Push and Bounded Soccer tasks, the finetuned policy achieves noticeably higher returns. This result signifies that \ours successfully leverages the privilege of online RL finetuning while minimizing failures requiring extensive human intervention in the real world.
\section{conclusion and future work}
We introduce a failure-aware offline-to-online reinforcement learning framework that integrates a world-model-based safety critic and a recovery policy to reduce \failures during exploration significantly. 

\textbf{Limitations and future work.} Our current work does not incorporate additional modalities such as 2D/3D vision, tactile sensing, and others, which could further enrich the perceptual capabilities of the system. In future work, we plan to extend \ours to address mobile manipulation and dual-arm robotic platforms and to integrate these other modalities. 

Another promising direction is to pre-train large-scale failure world models across diverse tasks and embodiments, potentially enabling better generalization to unseen failure modes in new domains.

\section*{Acknowledgements}
We would like to thank Zhecheng Yuan, Tianming Wei, Guangqi Jiang, and Xiyao Wang for their valuable discussions.
This work was supported by the Tsinghua University Dushi Program.

\printbibliography

@article{dalal2018safe,
 author = {Dalal, Gal and Dvijotham, Krishnamurthy and Vecerik, Matej and Hester, Todd and Paduraru, Cosmin and Tassa, Yuval},
 journal = {ArXiv preprint},
 title = {Safe exploration in continuous action spaces},
 volume = {abs/1801.08757},
 year = {2018}
}

@inproceedings{cheng2019end,
 author = {Richard Cheng and
G{\'{a}}bor Orosz and
Richard M. Murray and
Joel W. Burdick},
 bibsource = {dblp computer science bibliography, https://dblp.org},
 biburl = {https://dblp.org/rec/conf/aaai/ChengOMB19.bib},
 booktitle = {AAAI},
 doi = {10.1609/aaai.v33i01.33013387},
 pages = {3387--3395},
 timestamp = {Wed, 25 Sep 2019 01:00:00 +0200},
 title = {End-to-End Safe Reinforcement Learning through Barrier Functions for
Safety-Critical Continuous Control Tasks},
 year = {2019}
}

@article{2020arXiv200504374N,
 author = {{Nakka}, Yashwanth Kumar and {Liu}, Anqi and {Shi}, Guanya and {Anandkumar}, Anima and {Yue}, Yisong and {Chung}, Soon-Jo},
 journal = {ArXiv preprint},
 title = {{Chance-Constrained Trajectory Optimization for Safe Exploration and Learning of Nonlinear Systems}},
 volume = {abs/2005.04374},
 year = {2020}
}

@inproceedings{xiao2024safe,
 author = {Wenli Xiao and
Tairan He and
John M. Dolan and
Guanya Shi},
 bibsource = {dblp computer science bibliography, https://dblp.org},
 biburl = {https://dblp.org/rec/conf/icra/XiaoHDS24.bib},
 booktitle = {ICRA},
 doi = {10.1109/ICRA57147.2024.10611340},
 pages = {17286--17292},
 timestamp = {Mon, 19 Aug 2024 01:00:00 +0200},
 title = {Safe Deep Policy Adaptation},
 year = {2024}
}

@article{thananjeyan2021recovery,
 author = {Thananjeyan, Brijen and Balakrishna, Ashwin and Nair, Suraj and Luo, Michael and Srinivasan, Krishnan and Hwang, Minho and Gonzalez, Joseph E and Ibarz, Julian and Finn, Chelsea and Goldberg, Ken},
 journal = {RAL},
 number = {3},
 pages = {4915--4922},
 title = {Recovery rl: Safe reinforcement learning with learned recovery zones},
 volume = {6},
 year = {2021}
}

@book{altman2021constrained,
 author = {Altman, Eitan},
 title = {Constrained Markov decision processes},
 year = {2021}
}

@inproceedings{CQL,
 author = {Aviral Kumar and
Aurick Zhou and
George Tucker and
Sergey Levine},
 bibsource = {dblp computer science bibliography, https://dblp.org},
 biburl = {https://dblp.org/rec/conf/nips/KumarZTL20.bib},
 booktitle = {NeurIPS},
 timestamp = {Tue, 19 Jan 2021 00:00:00 +0100},
 title = {Conservative Q-Learning for Offline Reinforcement Learning},
 year = {2020}
}

@inproceedings{iql,
 author = {Ilya Kostrikov and
Ashvin Nair and
Sergey Levine},
 bibsource = {dblp computer science bibliography, https://dblp.org},
 biburl = {https://dblp.org/rec/conf/iclr/KostrikovNL22.bib},
 booktitle = {ICLR},
 timestamp = {Sat, 20 Aug 2022 01:00:00 +0200},
 title = {Offline Reinforcement Learning with Implicit Q-Learning},
 year = {2022}
}

@article{li2023proto,
 author = {Li, Jianxiong and Hu, Xiao and Xu, Haoran and Liu, Jingjing and Zhan, Xianyuan and Zhang, Ya-Qin},
 journal = {ArXiv preprint},
 title = {PROTO: Iterative Policy Regularized Offline-to-Online Reinforcement Learning},
 volume = {abs/2305.15669},
 year = {2023}
}

@inproceedings{cal-ql,
 author = {Mitsuhiko Nakamoto and
Simon Zhai and
Anikait Singh and
Max Sobol Mark and
Yi Ma and
Chelsea Finn and
Aviral Kumar and
Sergey Levine},
 bibsource = {dblp computer science bibliography, https://dblp.org},
 biburl = {https://dblp.org/rec/conf/nips/NakamotoZSM0FKL23.bib},
 booktitle = {NeurIPS},
 timestamp = {Fri, 01 Mar 2024 00:00:00 +0100},
 title = {Cal-QL: Calibrated Offline {RL} Pre-Training for Efficient Online
Fine-Tuning},
 year = {2023}
}

@inproceedings{lei2024unio,
 author = {Kun Lei and
Zhengmao He and
Chenhao Lu and
Kaizhe Hu and
Yang Gao and
Huazhe Xu},
 bibsource = {dblp computer science bibliography, https://dblp.org},
 biburl = {https://dblp.org/rec/conf/iclr/LeiHLH0X24.bib},
 booktitle = {ICLR},
 timestamp = {Wed, 07 Aug 2024 01:00:00 +0200},
 title = {Uni-O4: Unifying Online and Offline Deep Reinforcement Learning with
Multi-Step On-Policy Optimization},
 year = {2024}
}

@inproceedings{bppo,
 author = {Zifeng Zhuang and
Kun Lei and
Jinxin Liu and
Donglin Wang and
Yilang Guo},
 bibsource = {dblp computer science bibliography, https://dblp.org},
 biburl = {https://dblp.org/rec/conf/iclr/ZhuangLLWG23.bib},
 booktitle = {ICLR},
 timestamp = {Fri, 30 Jun 2023 01:00:00 +0200},
 title = {Behavior Proximal Policy Optimization},
 year = {2023}
}

@misc{yuan2023rlvigen,
      title={RL-ViGen: A Reinforcement Learning Benchmark for Visual Generalization}, 
      author={Zhecheng Yuan and Sizhe Yang and Pu Hua and Can Chang and Kaizhe Hu and Huazhe Xu},
      year={2023},
      eprint={2307.10224},
      archivePrefix={arXiv},
      primaryClass={cs.AI},
}

@article{ppo,
 author = {{Schulman}, John and {Wolski}, Filip and {Dhariwal}, Prafulla and {Radford}, Alec and {Klimov}, Oleg},
 journal = {ArXiv preprint},
 title = {{Proximal Policy Optimization Algorithms}},
 volume = {abs/1707.06347},
 year = {2017}
}

@inproceedings{yu2019meta,
 author = {Tianhe Yu and Deirdre Quillen and Zhanpeng He and Ryan Julian and Karol Hausman and Chelsea Finn and Sergey Levine},
 journal = {ArXiv preprint},
 title = {Meta-World: A Benchmark and Evaluation for Multi-Task and Meta Reinforcement Learning},
 volume = {abs/1910.10897},
 year = {2019}
}

@inproceedings{liang2022efficient,
 author = {Yongyuan Liang and
Yanchao Sun and
Ruijie Zheng and
Furong Huang},
 bibsource = {dblp computer science bibliography, https://dblp.org},
 biburl = {https://dblp.org/rec/conf/nips/LiangSZH22.bib},
 booktitle = {NeurIPS},
 timestamp = {Mon, 08 Jan 2024 00:00:00 +0100},
 title = {Efficient Adversarial Training without Attacking: Worst-Case-Aware
Robust Reinforcement Learning},
 year = {2022}
}

@article{ji2024ace,
  title={Ace: Off-policy actor-critic with causality-aware entropy regularization},
  author={Ji, Tianying and Liang, Yongyuan and Zeng, Yan and Luo, Yu and Xu, Guowei and Guo, Jiawei and Zheng, Ruijie and Huang, Furong and Sun, Fuchun and Xu, Huazhe},
  journal={arXiv preprint arXiv:2402.14528},
  year={2024}
}

@article{jiang2024robots,
  title={Robots pre-train robots: Manipulation-centric robotic representation from large-scale robot datasets},
  author={Jiang, Guangqi and Sun, Yifei and Huang, Tao and Li, Huanyu and Liang, Yongyuan and Xu, Huazhe},
  journal={arXiv preprint arXiv:2410.22325},
  year={2024}
}

@article{xu2023drm,
  title={Drm: Mastering visual reinforcement learning through dormant ratio minimization},
  author={Xu, Guowei and Zheng, Ruijie and Liang, Yongyuan and Wang, Xiyao and Yuan, Zhecheng and Ji, Tianying and Luo, Yu and Liu, Xiaoyu and Yuan, Jiaxin and Hua, Pu and others},
  journal={arXiv preprint arXiv:2310.19668},
  year={2023}
}

@article{liu2024safe,
 author = {Liu, Puze and Bou-Ammar, Haitham and Peters, Jan and Tateo, Davide},
 journal = {ArXiv preprint},
 title = {Safe reinforcement learning on the constraint manifold: Theory and applications},
 volume = {abs/2404.09080},
 year = {2024}
}

@inproceedings{wachi2024survey,
 author = {Akifumi Wachi and
Xun Shen and
Yanan Sui},
 bibsource = {dblp computer science bibliography, https://dblp.org},
 biburl = {https://dblp.org/rec/conf/ijcai/WachiSS24.bib},
 booktitle = {Proceedings of the Thirty-Third International Joint Conference on
Artificial Intelligence, {IJCAI} 2024, Jeju, South Korea, August 3-9,
2024},
 pages = {8262--8271},
 timestamp = {Fri, 18 Oct 2024 01:00:00 +0200},
 title = {A Survey of Constraint Formulations in Safe Reinforcement Learning},
 year = {2024}
}

@article{liang2018accelerated,
 author = {Liang, Qingkai and Que, Fanyu and Modiano, Eytan},
 journal = {ArXiv preprint},
 title = {Accelerated primal-dual policy optimization for safe reinforcement learning},
 volume = {abs/1802.06480},
 year = {2018}
}

@inproceedings{liang2024gametheoretic,
 author = {Yongyuan Liang and
Yanchao Sun and
Ruijie Zheng and
Xiangyu Liu and
Benjamin Eysenbach and
Tuomas Sandholm and
Furong Huang and
Stephen Marcus McAleer},
 bibsource = {dblp computer science bibliography, https://dblp.org},
 biburl = {https://dblp.org/rec/conf/iclr/LiangSZLESHM24.bib},
 booktitle = {ICLR},
 timestamp = {Wed, 07 Aug 2024 01:00:00 +0200},
 title = {Game-Theoretic Robust Reinforcement Learning Handles Temporally-Coupled
Perturbations},
 year = {2024}
}

@article{liu2024beyond,
  title={Beyond worst-case attacks: Robust RL with adaptive defense via non-dominated policies},
  author={Liu, Xiangyu and Deng, Chenghao and Sun, Yanchao and Liang, Yongyuan and Huang, Furong},
  journal={arXiv preprint arXiv:2402.12673},
  year={2024}
}

@inproceedings{liu2022robustness,
 author = {Zuxin Liu and
Zijian Guo and
Zhepeng Cen and
Huan Zhang and
Jie Tan and
Bo Li and
Ding Zhao},
 bibsource = {dblp computer science bibliography, https://dblp.org},
 biburl = {https://dblp.org/rec/conf/iclr/LiuGCZT0Z23.bib},
 booktitle = {ICLR},
 timestamp = {Fri, 30 Jun 2023 01:00:00 +0200},
 title = {On the Robustness of Safe Reinforcement Learning under Observational
Perturbations},
 year = {2023}
}

@article{tansehoon,
 author = {{Srinivasan}, Krishnan and {Eysenbach}, Benjamin and {Ha}, Sehoon and {Tan}, Jie and {Finn}, Chelsea},
 journal = {ArXiv preprint},
 title = {{Learning to be Safe: Deep RL with a Safety Critic}},
 volume = {abs/2010.14603},
 year = {2020}
}

@inproceedings{lg1,
 author = {Chen Tessler and
Daniel J. Mankowitz and
Shie Mannor},
 bibsource = {dblp computer science bibliography, https://dblp.org},
 biburl = {https://dblp.org/rec/conf/iclr/TesslerMM19.bib},
 booktitle = {ICLR},
 timestamp = {Thu, 25 Jul 2019 01:00:00 +0200},
 title = {Reward Constrained Policy Optimization},
 year = {2019}
}

@inproceedings{chow2018lyapunov,
 author = {Yinlam Chow and
Ofir Nachum and
Edgar A. Du{\'{e}}{\~{n}}ez{-}Guzm{\'{a}}n and
Mohammad Ghavamzadeh},
 bibsource = {dblp computer science bibliography, https://dblp.org},
 biburl = {https://dblp.org/rec/conf/nips/ChowNDG18.bib},
 booktitle = {NeurIPS},
 pages = {8103--8112},
 timestamp = {Thu, 21 Jan 2021 00:00:00 +0100},
 title = {A Lyapunov-based Approach to Safe Reinforcement Learning},
 year = {2018}
}

@article{chow2019lyapunov,
 author = {Chow, Yinlam and Nachum, Ofir and Faust, Aleksandra and Duenez-Guzman, Edgar and Ghavamzadeh, Mohammad},
 journal = {ArXiv preprint},
 title = {Lyapunov-based safe policy optimization for continuous control},
 volume = {abs/1901.10031},
 year = {2019}
}

@inproceedings{he2024agile,
 author = {Tairan He and
Chong Zhang and
Wenli Xiao and
Guanqi He and
Changliu Liu and
Guanya Shi},
 bibsource = {dblp computer science bibliography, https://dblp.org},
 biburl = {https://dblp.org/rec/conf/rss/HeZXHLS24.bib},
 booktitle = {RSS},
 doi = {10.15607/RSS.2024.XX.059},
 timestamp = {Mon, 27 Jan 2025 00:00:00 +0100},
 title = {Agile But Safe: Learning Collision-Free High-Speed Legged Locomotion},
 year = {2024}
}

@inproceedings{onesteprl,
 author = {David Brandfonbrener and
Will Whitney and
Rajesh Ranganath and
Joan Bruna},
 bibsource = {dblp computer science bibliography, https://dblp.org},
 biburl = {https://dblp.org/rec/conf/nips/BrandfonbrenerW21.bib},
 booktitle = {NeurIPS},
 pages = {4933--4946},
 timestamp = {Tue, 03 May 2022 01:00:00 +0200},
 title = {Offline {RL} Without Off-Policy Evaluation},
 year = {2021}
}

@article{wu2019behavior,
 author = {Wu, Yifan and Tucker, George and Nachum, Ofir},
 journal = {ArXiv preprint},
 title = {Behavior regularized offline reinforcement learning},
 volume = {abs/1911.11361},
 year = {2019}
}

@inproceedings{ji2024seizingserendipityexploitingvalue,
 author = {Tianying Ji and
Yu Luo and
Fuchun Sun and
Xianyuan Zhan and
Jianwei Zhang and
Huazhe Xu},
 bibsource = {dblp computer science bibliography, https://dblp.org},
 biburl = {https://dblp.org/rec/conf/icml/JiL0Z0X24.bib},
 booktitle = {ICML},
 timestamp = {Mon, 02 Sep 2024 01:00:00 +0200},
 title = {Seizing Serendipity: Exploiting the Value of Past Success in Off-Policy
Actor-Critic},
 year = {2024}
}

@misc{williams2015modelpredictivepathintegral,
 author = {Grady Williams and Andrew Aldrich and Evangelos Theodorou},
 journal = {ArXiv preprint},
 title = {Model Predictive Path Integral Control using Covariance Variable Importance Sampling},
 volume = {abs/1509.01149},
 year = {2015}
}

@inproceedings{chi2023diffusion,
 author = {Cheng Chi and
Siyuan Feng and
Yilun Du and
Zhenjia Xu and
Eric Cousineau and
Benjamin Burchfiel and
Shuran Song},
 bibsource = {dblp computer science bibliography, https://dblp.org},
 biburl = {https://dblp.org/rec/conf/rss/ChiFDXCBS23.bib},
 booktitle = {RSS},
 doi = {10.15607/RSS.2023.XIX.026},
 timestamp = {Mon, 29 Apr 2024 01:00:00 +0200},
 title = {Diffusion Policy: Visuomotor Policy Learning via Action Diffusion},
 year = {2023}
}

@inproceedings{shridhar2023perceiver,
 author = {Shridhar, Mohit and Manuelli, Lucas and Fox, Dieter},
 booktitle = {CoRL},
 organization = {PMLR},
 pages = {785--799},
 title = {Perceiver-actor: A multi-task transformer for robotic manipulation},
 year = {2023}
}

@inproceedings{gupta2021resetfreereinforcementlearningmultitask,
 author = {Abhishek Gupta and
Justin Yu and
Tony Z. Zhao and
Vikash Kumar and
Aaron Rovinsky and
Kelvin Xu and
Thomas Devlin and
Sergey Levine},
 bibsource = {dblp computer science bibliography, https://dblp.org},
 biburl = {https://dblp.org/rec/conf/icra/0004YZKRXDL21.bib},
 booktitle = {ICRA},
 doi = {10.1109/ICRA48506.2021.9561384},
 pages = {6664--6671},
 timestamp = {Fri, 22 Oct 2021 01:00:00 +0200},
 title = {Reset-Free Reinforcement Learning via Multi-Task Learning: Learning
Dexterous Manipulation Behaviors without Human Intervention},
 year = {2021}
}

@inproceedings{sharma2022autonomousreinforcementlearningformalism,
 author = {Archit Sharma and
Kelvin Xu and
Nikhil Sardana and
Abhishek Gupta and
Karol Hausman and
Sergey Levine and
Chelsea Finn},
 bibsource = {dblp computer science bibliography, https://dblp.org},
 biburl = {https://dblp.org/rec/conf/iclr/SharmaXS0HLF22.bib},
 booktitle = {ICLR},
 timestamp = {Sat, 20 Aug 2022 01:00:00 +0200},
 title = {Autonomous Reinforcement Learning: Formalism and Benchmarking},
 year = {2022}
}

@inproceedings{eysenbach2017leavetracelearningreset,
 author = {Benjamin Eysenbach and
Shixiang Gu and
Julian Ibarz and
Sergey Levine},
 bibsource = {dblp computer science bibliography, https://dblp.org},
 biburl = {https://dblp.org/rec/conf/iclr/EysenbachGIL18.bib},
 booktitle = {ICLR},
 timestamp = {Thu, 25 Jul 2019 01:00:00 +0200},
 title = {Leave no Trace: Learning to Reset for Safe and Autonomous Reinforcement
Learning},
 year = {2018}
}

@inproceedings{chi2024universal,
 author = {Cheng Chi and
Zhenjia Xu and
Chuer Pan and
Eric Cousineau and
Benjamin Burchfiel and
Siyuan Feng and
Russ Tedrake and
Shuran Song},
 bibsource = {dblp computer science bibliography, https://dblp.org},
 biburl = {https://dblp.org/rec/conf/rss/ChiXPCB0TS24.bib},
 booktitle = {RSS},
 doi = {10.15607/RSS.2024.XX.045},
 timestamp = {Mon, 27 Jan 2025 00:00:00 +0100},
 title = {Universal Manipulation Interface: In-The-Wild Robot Teaching Without
In-The-Wild Robots},
 year = {2024}
}

@article{ray2019benchmarking,
 author = {Ray, Alex and Achiam, Joshua and Amodei, Dario},
 journal = {ArXiv preprint},
 title = {Benchmarking safe exploration in deep reinforcement learning},
 volume = {abs/1910.01708},
 year = {2019}
}

@inproceedings{cpo,
 author = {Joshua Achiam and
David Held and
Aviv Tamar and
Pieter Abbeel},
 bibsource = {dblp computer science bibliography, https://dblp.org},
 biburl = {https://dblp.org/rec/conf/icml/AchiamHTA17.bib},
 booktitle = {ICML},
 pages = {22--31},
 series = {Proceedings of Machine Learning Research},
 timestamp = {Wed, 03 Apr 2019 01:00:00 +0200},
 title = {Constrained Policy Optimization},
 volume = {70},
 year = {2017}
}

@inproceedings{p3o,
 author = {Linrui Zhang and
Li Shen and
Long Yang and
Shixiang Chen and
Xueqian Wang and
Bo Yuan and
Dacheng Tao},
 bibsource = {dblp computer science bibliography, https://dblp.org},
 biburl = {https://dblp.org/rec/conf/ijcai/ZhangSYCWYT22.bib},
 booktitle = {Proceedings of the Thirty-First International Joint Conference on
Artificial Intelligence, {IJCAI} 2022, Vienna, Austria, 23-29 July
2022},
 doi = {10.24963/IJCAI.2022/520},
 pages = {3744--3750},
 timestamp = {Fri, 06 Jan 2023 00:00:00 +0100},
 title = {Penalized Proximal Policy Optimization for Safe Reinforcement Learning},
 year = {2022}
}

@inproceedings{
huang2024safedreamer,
title={SafeDreamer: Safe Reinforcement Learning with World Models},
author={Weidong Huang and Jiaming Ji and Chunhe Xia and Borong Zhang and Yaodong Yang},
booktitle={The Twelfth International Conference on Learning Representations},
year={2024},
url={https://openreview.net/forum?id=tsE5HLYtYg}
}

@INPROCEEDINGS{cbf,
  author={Ames, Aaron D. and Coogan, Samuel and Egerstedt, Magnus and Notomista, Gennaro and Sreenath, Koushil and Tabuada, Paulo},
  booktitle={2019 18th European Control Conference (ECC)}, 
  title={Control Barrier Functions: Theory and Applications}, 
  year={2019},
  volume={},
  number={},
  pages={3420-3431},
  keywords={Surveys;Autonomous systems;Europe;Pressing;Control systems;Safety;Robots;Optimization},
  doi={10.23919/ECC.2019.8796030}}

@article{wabersich2021predictive,
title = {A predictive safety filter for learning-based control of constrained nonlinear dynamical systems},
journal = {Automatica},
volume = {129},
pages = {109597},
year = {2021},
issn = {0005-1098},
doi = {https://doi.org/10.1016/j.automatica.2021.109597},
url = {https://www.sciencedirect.com/science/article/pii/S0005109821001175},
author = {Kim Peter Wabersich and Melanie N. Zeilinger},
keywords = {Safe learning-based control, Control of constrained systems, Robust control of nonlinear systems, Data-based control},
abstract = {The transfer of reinforcement learning (RL) techniques into real-world applications is challenged by safety requirements in the presence of physical limitations. Most RL methods, in particular the most popular algorithms, do not support explicit consideration of state and input constraints. In this paper, we address this problem for nonlinear systems with continuous state and input spaces by introducing a predictive safety filter, which is able to turn a constrained dynamical system into an unconstrained safe system and to which any RL algorithm can be applied ‘out-of-the-box’. The predictive safety filter receives the proposed control input and decides, based on the current system state, if it can be safely applied to the real system, or if it has to be modified otherwise. Safety is thereby established by a continuously updated safety policy, which is based on a model predictive control formulation using a data-driven system model and considering state and input dependent uncertainties.}
}

@article{cully2015robots,
  title={Robots that can adapt like animals},
  author={Cully, Antoine and Clune, Jeff and Tarapore, Danesh and Mouret, Jean-Baptiste},
  journal={Nature},
  volume={521},
  number={7553},
  pages={503--507},
  year={2015},
  publisher={Nature Publishing Group}
}

@article{rl100,
  title={Rl-100: Performant robotic manipulation with real-world reinforcement learning},
  author={Lei, Kun and Li, Huanyu and Yu, Dongjie and Wei, Zhenyu and Guo, Lingxiao and Jiang, Zhennan and Wang, Ziyu and Liang, Shiyu and Xu, Huazhe},
  journal={arXiv preprint arXiv:2510.14830},
  year={2025}
}

@article{yuan2025hermes,
  title={Hermes: Human-to-robot embodied learning from multi-source motion data for mobile dexterous manipulation},
  author={Yuan, Zhecheng and Wei, Tianming and Gu, Langzhe and Hua, Pu and Liang, Tianhai and Chen, Yuanpei and Xu, Huazhe},
  journal={arXiv preprint arXiv:2508.20085},
  year={2025}
}

@inproceedings{he2024learning,
  title={Learning visual quadrupedal loco-manipulation from demonstrations},
  author={He, Zhengmao and Lei, Kun and Ze, Yanjie and Sreenath, Koushil and Li, Zhongyu and Xu, Huazhe},
  booktitle={2024 IEEE/RSJ International Conference on Intelligent Robots and Systems (IROS)},
  pages={9102--9109},
  year={2024},
  organization={IEEE}
}
\begin{acronym}
\acro{HP}{high-pass}
\acro{LP}{low-pass}
\end{acronym}

\end{document}